\documentclass{article}

\usepackage{microtype}
\usepackage{graphicx}
\usepackage{subcaption}
\usepackage{booktabs}

\usepackage{hyperref}

\usepackage[accepted]{icml2026}
\usepackage{comment}
\usepackage{amsmath}
\usepackage{amssymb}
\usepackage{mathtools}
\usepackage{amsthm}
\usepackage{algorithm}

\usepackage[capitalize,noabbrev]{cleveref}

\theoremstyle{plain}
\newtheorem{theorem}{Theorem}[section]
\newtheorem{proposition}[theorem]{Proposition}
\newtheorem{lemma}[theorem]{Lemma}

\theoremstyle{definition}

\theoremstyle{remark}

\newcommand{\dout}{d_{\mathrm{out}}}
\newcommand{\din}{d_{\mathrm{in}}}
\newcommand{\Sigx}{\Sigma_x}
\newcommand{\DWT}{\Delta W_T}
\newcommand{\tWT}{\widetilde{W}_T}
\newcommand{\Ut}{\widetilde{U}_T}
\newcommand{\Utr}{\widetilde{U}_{T,r}}
\newcommand{\sigt}{\widetilde{\sigma}}
\newcommand{\Lalign}{\mathcal{L}_{\mathrm{align}}}
\newcommand{\Lcoeff}{\mathcal{L}_{\mathrm{coeff}}}
\newcommand{\Lkd}{\mathcal{L}_{\mathrm{KD}}}

\newcommand{\colspan}{\operatorname{span}}

\icmltitlerunning{SAD-LoRA: Spectral Alignment for Low-Rank Knowledge Distillation}

\begin{document}

\twocolumn[
  \icmltitle{SAD-LoRA: Spectral Alignment for Low-Rank Knowledge Distillation}

  \icmlsetsymbol{equal}{*}

  \begin{icmlauthorlist}
    \icmlauthor{Omer Tariq}{yyy}
    \icmlauthor{Syed Muhammad Raza}{yyy}
    \icmlauthor{Jeongbae Son}{yyy}
  \end{icmlauthorlist}

  \icmlaffiliation{yyy}{Neubility Inc, Seoul, South Korea}

  \icmlcorrespondingauthor{Omer Tariq}{omer.tariq@neubility.co.kr}

  \icmlkeywords{Parameter-efficient fine-tuning, Knowledge distillation, Low-rank adaptation, Spectral methods, Model compression}

  \vskip 0.3in
]

\printAffiliationsAndNotice{}  

\begin{abstract}
Distilling a fine-tuned teacher into a LoRA-adapted student is a standard recipe for parameter-efficient compression, but output-level KD does not explicitly control which rank-$r$ weight subspace the adapter occupies. We propose \textbf{SAD-LoRA} (\textbf{S}pectral \textbf{A}lignment \textbf{D}istillation), which selects this subspace from the data-weighted student-space reference update $\DWT\Sigx^{1/2}$ and maintains it during training via a differentiable principal-angle loss on $\colspan(B)$. We show that the data-weighted distillation error decomposes exactly into subspace misalignment, within-subspace coefficient mismatch, and irreducible rank residual; standard KD can affect the first term only indirectly through output gradients. On controlled synthetic problems with a flat teacher spectrum, SAD-LoRA reduces the subspace-misalignment term from $51\%$ to nearly zero and lifts final subspace alignment from $0.49$ to $1.00$. On RoBERTa-large to RoBERTa-base distillation across six GLUE tasks, SAD-LoRA improves rank efficiency: at $r{=}4$, it matches or beats the strongest included spectral baseline on five of six tasks, and at $r{=}8$ it gives the best result on SST-2 and CoLA. Ablations identify subspace alignment as the load-bearing component, while coefficient matching is auxiliary.
\end{abstract}

\section{Introduction}

Parameter-efficient fine-tuning has become standard for adapting large pretrained models. Among existing approaches \citep{han2024peftsurvey}, low-rank adaptation (LoRA) is particularly attractive because it freezes pretrained weights and learns a task-specific update $\Delta W=BA$, with $B\in\mathbb{R}^{\dout\times r}$, $A\in\mathbb{R}^{r\times\din}$, and $r\ll\min(\dout,\din)$ \citep{hu2022lora}. In parallel, knowledge distillation (KD) compresses a stronger teacher into a smaller or capacity-constrained student \citep{hinton2015distilling,gou2021knowledge}. Combining LoRA with KD is therefore natural for efficient compression: the student receives teacher supervision while only a low-rank update is trained \citep{azimi2024kdlora,yang2024llmneo}.

This combination, however, leaves a structural question unresolved. Standard KD objectives operate on logits or intermediate activations, while LoRA constrains the student to a rank-$r$ \emph{weight} subspace. The KD loss can reduce output discrepancy without explicitly determining which $r$ directions the adapter should occupy. Because the rank-$r$ student cannot represent every direction of the teacher update, a misaligned column space of $B$ wastes part of the rank budget on directions that are off-target with respect to the teacher's task-specific transformation. We refer to this failure mode as \emph{subspace misalignment}: a distillation error component that is invisible to output-level KD but irreducible without rotating the adapter's column space.

Existing spectral LoRA methods address adjacent but distinct problems: they initialize or allocate adapters using the spectrum of the pretrained weight matrix \citep{meng2024pissa,wang-etal-2025-milora,buehler2024olora}, activation-aware pretrained-weight decompositions \citep{yang2024corda}, or sensitivity-based subset selection \citep{zhong2024seeking}. These methods provide useful starting points, but they do not explicitly preserve a teacher-update subspace during distillation; after initialization, $\colspan(B)$ can still drift under task and KD gradients. The central issue is therefore not only how to initialize LoRA, but how to maintain the relevant low-rank subspace throughout training (Fig.~\ref{fig:sadlora_concept}).

\begin{figure}[t]
    \centering
    \includegraphics[width=\columnwidth]{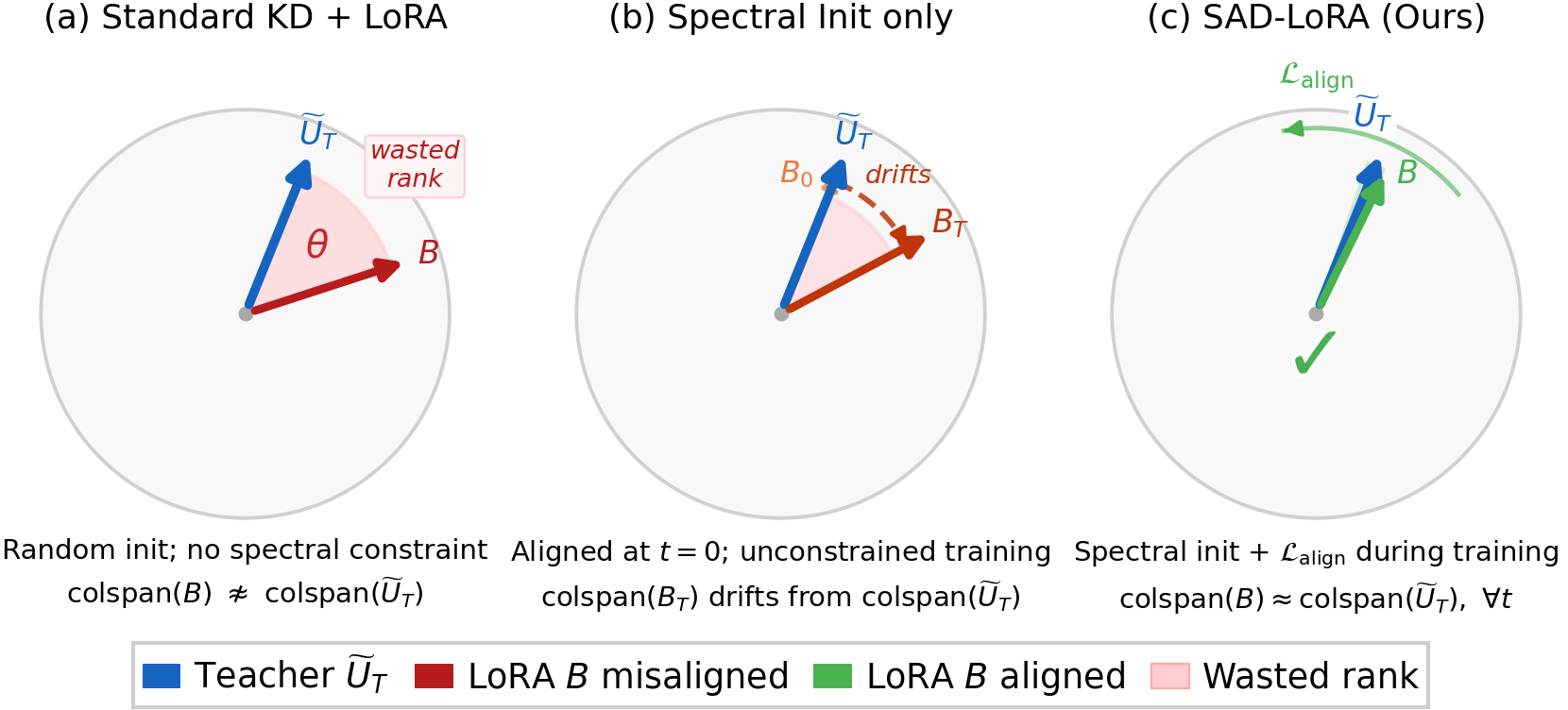}
    \caption{Three regimes for low-rank distillation.
    (a) Standard KD+LoRA may place $\colspan(B)$ away from the teacher subspace $\colspan(\Utr)$, wasting rank budget.
    (b) Spectral initialization aligns the adapter at $t=0$ but does not constrain later subspace drift.
    (c) SAD-LoRA combines spectral initialization with a continuous alignment loss $\Lalign$ that encourages $\colspan(B)\approx\colspan(\Utr)$ throughout training.}
    \label{fig:sadlora_concept}
    \vspace{-0.3cm}
\end{figure}

This paper studies LoRA distillation from a spectral approximation perspective. Let $W_0^{(\ell)}$ be a pretrained student weight matrix and let $W_T^{(\ell)}$ denote a task-adapted reference weight in the same student parameter space, defining the student-space update $\DWT^{(\ell)} = W_T^{(\ell)} - W_0^{(\ell)}$. A rank-$r$ LoRA student approximates this update by $B^{(\ell)}A^{(\ell)}$. Approximation quality depends not only on the singular values of $\DWT^{(\ell)}$, but also on whether $\colspan(B^{(\ell)})$ aligns with the dominant output-side directions of the update under the data distribution. Because distillation is evaluated on data, the relevant spectral object is the data-weighted update $\tWT^{(\ell)} = \DWT^{(\ell)}(\Sigma_x^{(\ell)})^{1/2}$, where $\Sigma_x^{(\ell)}$ is the layer-input covariance.

We propose SAD-LoRA, which turns alignment with the leading left singular subspace of $\tWT^{(\ell)}$ into a training objective. SAD-LoRA augments KD with a Grassmannian principal-angle loss on $\colspan(B)$ and, optionally, a within-subspace coefficient-matching term. We show that the data-weighted distillation error decomposes exactly into subspace misalignment, coefficient mismatch, and rank residual; at fixed rank, only the first two are reducible by changing $B$ and $A$. Standard KD can affect the subspace term only indirectly through output gradients, whereas SAD-LoRA targets it explicitly.

\textbf{Contributions.}
\begin{enumerate}
    \setlength{\itemsep}{0.15em}
    \setlength{\topsep}{0.25em}
    \item We formulate LoRA-based knowledge distillation as a data-weighted low-rank approximation problem and prove an exact three-term error decomposition (Theorem~\ref{thm:decomp}) that isolates subspace misalignment as a distinct, reducible source of error.
    \item We identify the optimal rank-$r$ adapter subspace as $\colspan(\Utr^{(\ell)})$, the leading left singular subspace of $\DWT^{(\ell)}(\Sigma_x^{(\ell)})^{1/2}$, and derive a layerwise rank-sufficiency criterion from its spectral tail (Propositions~\ref{prop:data_weighted_subspace}--\ref{prop:rank_suff}).
    \item We introduce SAD-LoRA, which uses a differentiable principal-angle alignment loss $\Lalign$ and an optional coefficient-matching loss $\Lcoeff$ to keep the LoRA adapter aligned with the data-weighted teacher-update subspace during distillation.
    \item We validate SAD-LoRA on controlled synthetic problems, where the decomposition can be checked exactly, and on RoBERTa-large to RoBERTa-base distillation across six GLUE tasks, emphasizing rank efficiency and alignment-versus-coefficient ablations.
\end{enumerate}

\begin{figure*}[t]
    \centering
    \includegraphics[width=0.93\textwidth]{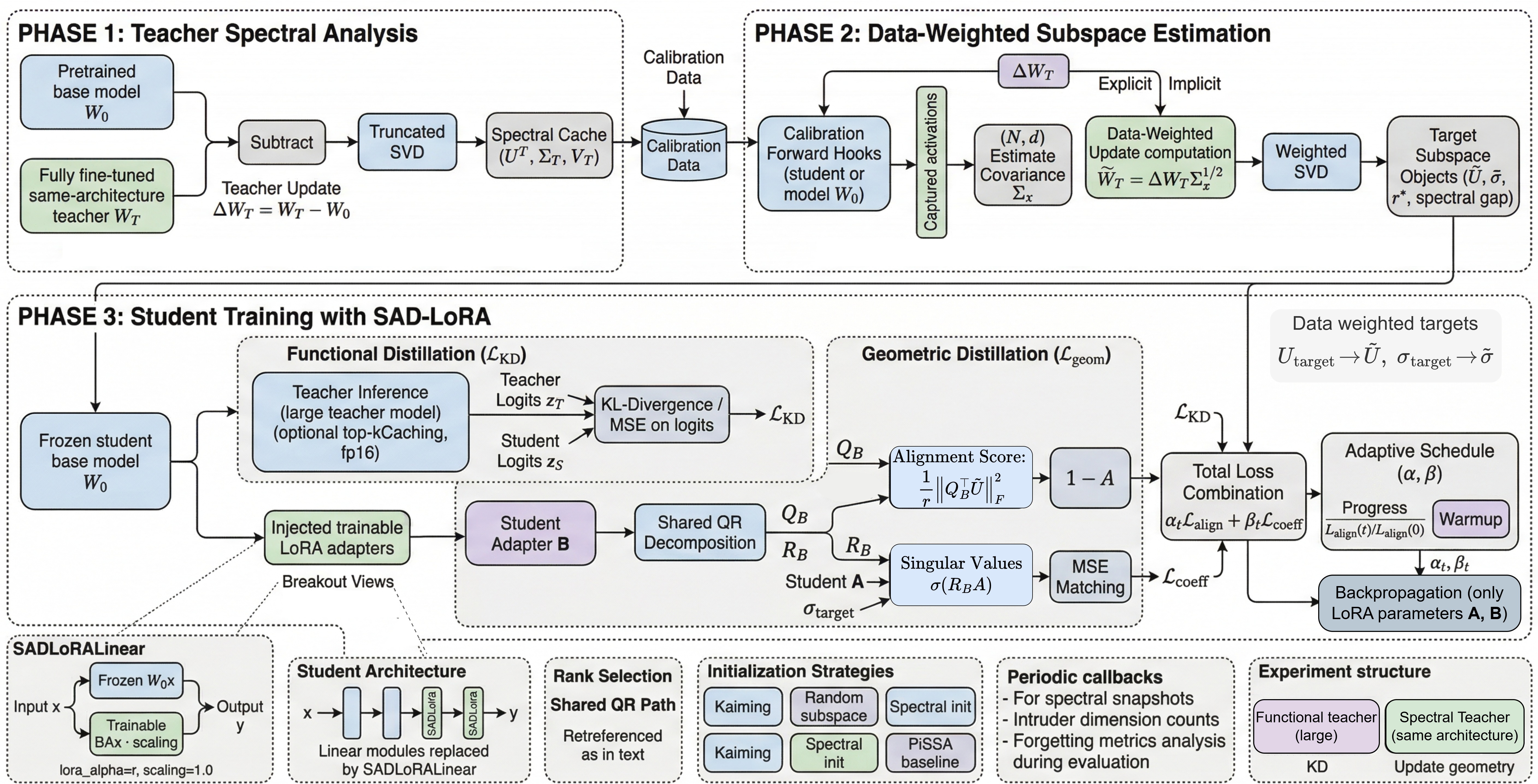}
    \caption{Overview of SAD-LoRA. Layerwise spectral targets $\Utr^{(\ell)}$ and $\sigt_T^{(\ell)}$ are computed offline from a same-architecture reference teacher and calibration activations, then frozen. During training, the student receives output-level KD while the LoRA column space is geometrically aligned to the frozen targets. Only the LoRA adapters $\{B^{(\ell)},A^{(\ell)}\}$ are optimized.}
    \label{fig:sadlora_overview}
    \vspace{-0.3cm}
\end{figure*}

\section{Related Work}
\label{sec:related_work}

\textbf{PEFT and LoRA.}
Parameter-efficient fine-tuning adapts pretrained models by training small task-specific modules on top of frozen or near-frozen backbones \citep{houlsby2019adapter,li2021prefix,lester2021prompt,benzaken2021bitfit,hu2022lora,ding2023peftsurvey,han2024peftsurvey}. LoRA is widely used because it parameterizes a frozen-weight update as $\Delta W=BA$, preserves the merged architecture, and adds no inference latency \citep{hu2022lora}. Subsequent variants improve LoRA through adaptive rank allocation, weight decomposition, quantization, parameter sharing, and modified parameterizations \citep{zhang2023adalora,valipour2022dylora,liu2024dora,kopiczko2024vera,renduchintala2023tiedlora,li2024vblora,dettmers2023qlora,li2023loftq,xu2023qalora}. SAD-LoRA keeps the same low-rank adapter form, but changes the training objective by explicitly aligning the adapter subspace with a data-weighted teacher-update target.

\textbf{Spectral and activation-aware LoRA.}
A closer line of work exploits spectral structure to improve LoRA. AdaLoRA uses an SVD-like parameterization and prunes less important singular components \citep{zhang2023adalora}; PiSSA initializes adapters from principal singular components of the pretrained weight $W_0$ \citep{meng2024pissa}; MiLoRA uses minor components to preserve principal pretrained knowledge \citep{wang-etal-2025-milora}; and OLoRA uses QR-based orthonormal initialization for optimization stability \citep{buehler2024olora}. These methods show that adapter subspace matters, but they decompose the pretrained weight or its compressed approximation, not the teacher's task-specific update. CorDA is closest to our data-weighted view: it orients pretrained-weight decomposition using input activation covariance from calibration data \citep{yang2024corda}. Activation-aware quantization and low-rank reconstruction use related covariance-aware principles \citep{yuan2023asvd,lin2024awq}. SAD-LoRA differs in the decomposed object: CorDA uses $W_0\widehat{\Sigma}_x$, whereas SAD-LoRA uses the student-space teacher update $\DWT\widehat{\Sigma}_x^{1/2}$. For KD, this distinction is central because the approximation target is the teacher's task update rather than the pretrained weight.

\textbf{KD with LoRA students.}
KD transfers teacher behavior through labels, logits, or intermediate representations \citep{hinton2015distilling,gou2021knowledge}. Recent methods combine KD with LoRA, including KD-LoRA for encoder benchmarks \citep{azimi2024kdlora}, LLM-Neo for parameter-efficient LLM distillation \citep{yang2024llmneo}, and PC-LoRA for progressive compression \citep{hwang2024pclora}. These works establish KD$+$LoRA as an effective compression paradigm, but treat the adapter mainly as a parameter-efficient training channel; they do not specify which rank-$r$ weight subspace should approximate the teacher update. Concurrent theory on low-rank KD provides sample-complexity rank guarantees \citep{soarez2026demystifying}; our criterion is complementary because it is layerwise, data-dependent, and based on the spectral tail of $\tWT$.

\textbf{Subspace drift and intruder dimensions.}
\citet{shuttleworth2024lora} show that LoRA trained with standard objectives can develop intruder dimensions: high-energy directions of $BA$ that are orthogonal to both the pretrained spectrum and the teacher update and can drive forgetting. SAD-LoRA addresses the same geometric failure mode from the KD side: components of $B$ outside the target data-weighted subspace increase $\Lalign$, so any output-level gain must trade off against the alignment penalty.
\section{Method}
\label{sec:method}

SAD-LoRA aligns the LoRA adapter with the data-weighted spectral subspace of a student-space teacher update. We first define the target update and its implicit SVD, then introduce the alignment and coefficient losses, the training objective, initialization, and computational cost. Fig.~\ref{fig:sadlora_overview} gives an overview; full pseudocode is provided in Appendix~\ref{app:algorithm}.

\subsection{Setup and the Data-Weighted Teacher Update}

Let $W_0^{(\ell)} \in \mathbb{R}^{\dout\times \din}$ denote the pretrained weight matrix at layer~$\ell$. A LoRA student freezes $W_0^{(\ell)}$ and learns a rank-$r_\ell$ update
\begin{equation}
    W_S^{(\ell)} = W_0^{(\ell)} + B^{(\ell)}A^{(\ell)},
    \qquad
    r_\ell \ll \min(\dout,\din),
\end{equation}
with $B^{(\ell)} \in \mathbb{R}^{\dout\times r_\ell}$ and $A^{(\ell)} \in \mathbb{R}^{r_\ell\times \din}$. The LoRA scaling factor is absorbed into $B^{(\ell)}A^{(\ell)}$. We suppress the layer index $(\ell)$ when context is clear. Let $W_T^{(\ell)}$ denote a task-adapted reference weight in the same parameter space as the student, and define the student-space reference update $\DWT^{(\ell)} = W_T^{(\ell)} - W_0^{(\ell)}$. When the functional teacher providing distillation logits is larger than the student, this same-architecture reference model supplies only the weight-space spectral target; the larger teacher still supplies $\Lkd$. This separates functional supervision from the student-space geometry required for spectral alignment.

Output-level KD constrains predictions but not the rank-$r_\ell$ adapter subspace; SAD-LoRA derives this subspace from the teacher update under the downstream activation distribution. For layer input $x^{(\ell)}$ with covariance $\Sigma_x^{(\ell)} = \mathbb{E}[x^{(\ell)}x^{(\ell)\top}]$, the layerwise surrogate error satisfies
\begin{equation}
\begin{aligned}
    \mathcal{E}^{(\ell)}
    &= \mathbb{E}\!\left[\big\|(\DWT^{(\ell)} - B^{(\ell)}A^{(\ell)})\,x^{(\ell)}\big\|_2^2\right] \\
    &= \big\|(\DWT^{(\ell)} - B^{(\ell)}A^{(\ell)})\,(\Sigma_x^{(\ell)})^{1/2}\big\|_F^2 \\
    &= \big\|\tWT^{(\ell)} - B^{(\ell)}A^{(\ell)}(\Sigma_x^{(\ell)})^{1/2}\big\|_F^2 ,
\end{aligned}
\label{eq:data_weighted_error}
\end{equation}
where the \emph{data-weighted teacher update} is
\begin{equation}
    \tWT^{(\ell)} = \DWT^{(\ell)} (\Sigma_x^{(\ell)})^{1/2} .
    \label{eq:data_weighted_update}
\end{equation}
The relevant spectral object is therefore $\tWT^{(\ell)}$, not the unweighted update alone. Let its compact SVD be
\begin{equation}
    \tWT^{(\ell)} = \Ut^{(\ell)}\widetilde{\Sigma}_T^{(\ell)} \widetilde{V}_T^{(\ell)\top},
    \qquad \sigt_1^{(\ell)} \geq \sigt_2^{(\ell)} \geq \cdots .
    \label{eq:data_weighted_svd}
\end{equation}
The leading left singular vectors $\Utr^{(\ell)} := \Ut^{(\ell)}[:,1{:}r_\ell]$ define the target output-side subspace for $B^{(\ell)}$.

\textbf{Implicit SVD.} The covariance is never formed explicitly. Given centered calibration activations $X_c^{(\ell)} \in \mathbb{R}^{n\times \din}$ collected from the student backbone on a small calibration set, define
\begin{equation}
    Z^{(\ell)} = \tfrac{1}{\sqrt{n-1}}\,\DWT^{(\ell)} X_c^{(\ell)\top}
    \in \mathbb{R}^{\dout\times n}.
    \label{eq:implicit_svd}
\end{equation}
Because $\widehat{\Sigma}_x^{(\ell)} = \tfrac{1}{n-1} X_c^{(\ell)\top} X_c^{(\ell)}$, we have
\begin{equation}
    Z^{(\ell)} Z^{(\ell)\top}
    = \DWT^{(\ell)}\widehat{\Sigma}_x^{(\ell)}\DWT^{(\ell)\top}
    = \tWT^{(\ell)}\tWT^{(\ell)\top},
\end{equation}
so the left singular vectors of $Z^{(\ell)}$ are the empirical $\Utr^{(\ell)}$ and its singular values are the empirical $\sigt_i^{(\ell)}$. This avoids forming the $\din\times \din$ covariance matrix when $n \ll \din$. The targets $\Utr^{(\ell)}$ and $\sigt_i^{(\ell)}$ are computed once and frozen for the rest of training.

\subsection{Spectral Alignment and Coefficient Losses}
\label{sec:spectral_alignment_distillation}

The matrix $B^{(\ell)}$ alone determines the output-side subspace accessible to the LoRA adapter. Let $B^{(\ell)} = Q_B^{(\ell)} R_B^{(\ell)}$ be its thin QR decomposition. SAD-LoRA penalizes the principal-angle discrepancy between $\colspan(B^{(\ell)})$ and $\colspan(\Utr^{(\ell)})$:
\begin{equation}
\begin{aligned}
    \Lalign^{(\ell)}
    &= 1 - \tfrac{1}{r_\ell}\big\| Q_B^{(\ell)\top}\,\Utr^{(\ell)} \big\|_F^2 \\
    &= \tfrac{1}{r_\ell}\sum_{i=1}^{r_\ell} \sin^2\theta_i^{(\ell)} ,
\end{aligned}
\label{eq:alignment_loss}
\end{equation}
where $\theta_i^{(\ell)}$ are the principal angles between the two $r_\ell$-dimensional subspaces. The loss is zero exactly when the adapter and target subspaces coincide and one when they are orthogonal.

Because $Q_B^{(\ell)}$ is orthonormal, the nonzero singular values of $B^{(\ell)}A^{(\ell)}$ equal those of $R_B^{(\ell)}A^{(\ell)}$, which is an $r_\ell\times \din$ matrix and therefore much cheaper to decompose. We define a spectral coefficient-matching loss
\begin{equation}
    \Lcoeff^{(\ell)}
    = \frac{\big\|\sigma\!\left(R_B^{(\ell)}A^{(\ell)}\right) - \sigt_{T,r}^{(\ell)}\big\|_2^2}{\big\|\sigt_{T,r}^{(\ell)}\big\|_2^2 + \varepsilon} ,
    \label{eq:coefficient_loss}
\end{equation}
where $\sigt_{T,r}^{(\ell)}=(\sigt_1^{(\ell)},\ldots,\sigt_{r_\ell}^{(\ell)})$ and $\varepsilon$ is a small constant. This is an unweighted proxy for the data-weighted coefficient objective analyzed in Section~\ref{sec:analysis}; it avoids applying $\Sigma_x^{1/2}$ at every step and is used only as an auxiliary regularizer.

\subsection{Training Objective}
\label{sec:training_objective}

For classification, we use temperature-scaled logit distillation; for regression, we use mean-squared error between teacher and student predictions. Let $z_T,z_S$ denote teacher and student logits and $\tau$ the distillation temperature:
\begin{equation}
    \Lkd = \tau^2\,\mathrm{KL}\!\left(\mathrm{softmax}(z_T/\tau)\,\|\,\mathrm{softmax}(z_S/\tau)\right).
    \label{eq:kd_loss}
\end{equation}
With supervised task loss $\mathcal{L}_{\mathrm{task}}$, the SAD-LoRA objective is
\begin{equation}
\begin{aligned}
    \mathcal{L}
    &= \mathcal{L}_{\mathrm{task}} + \lambda_{\mathrm{KD}}\,\Lkd \\
    &\quad+ \alpha\,\frac{1}{|\mathcal{S}|}\sum_{\ell\in\mathcal{S}}\Lalign^{(\ell)}
    + \beta\,\frac{1}{|\mathcal{S}|}\sum_{\ell\in\mathcal{S}}\Lcoeff^{(\ell)} ,
\end{aligned}
\label{eq:sadlora_objective}
\end{equation}
where $\mathcal{S}$ is the set of adapted layers. The full objective uses both spectral terms ($\alpha,\beta>0$); the alignment-only variant sets $\beta=0$, the coefficient-only variant sets $\alpha=0$. We keep the spectral weights $\alpha$ and $\beta$ fixed throughout training; the optimizer uses the learning-rate warmup and decay described in Appendix~\ref{app:experimental_details}.

\subsection{Spectral Initialization}
\label{sec:initialization_optimization}

SAD-LoRA initializes the adapter directly in the target subspace:
\begin{equation}
    B^{(\ell)} = \Utr^{(\ell)},\qquad A^{(\ell)} = 0 .
    \label{eq:spectral_initialization}
\end{equation}
Two consequences follow. First, $B^{(\ell)}A^{(\ell)} = 0$ at $t{=}0$, so the student function exactly reproduces the pretrained backbone. Second, $\Lalign^{(\ell)}$ is identically zero at initialization, so $\Lkd$ supplies the dominant gradient signal from the first step and $\Lalign$ acts as a regularizer that prevents drift rather than fighting the data fit.
If the requested rank exceeds the effective target rank determined from the data-weighted spectral tail (see Proposition~\ref{prop:rank_suff}), we fill the remaining columns of $B^{(\ell)}$ with a random orthonormal complement obtained by Gram-Schmidt against $\Utr^{(\ell)}$. Zero-padding would leave $B^{(\ell)}$ rank-deficient and lock $\Lalign$ above a hard floor regardless of training; orthonormal-complement padding guarantees both $\Lalign^{(0)} = 0$ and full column rank.

\subsection{Training Procedure and Cost}
\label{sec:algorithm}

Spectral targets are computed once offline by forming the sketch $Z^{(\ell)}$ in Eq.~\ref{eq:implicit_svd} from calibration activations and registering $\Utr^{(\ell)}$ and $\sigt_{T,r}^{(\ell)}$ as frozen buffers. Training then initializes $B^{(\ell)}=\Utr^{(\ell)}$, sets $A^{(\ell)}=0$, and updates only the LoRA parameters using Eq.~\ref{eq:sadlora_objective}. 
The additional per-step cost is one thin QR of $B^{(\ell)}$ for $\Lalign$ and, when $\Lcoeff$ is used, one SVD of $R_B^{(\ell)}A^{(\ell)}\in\mathbb{R}^{r\times\din}$, giving $O(r^2(\dout+\din))$ per adapted layer. The targets are fixed after preprocessing, the backbone remains frozen, and no inference-time module is added. Full pseudocode is given in Appendix~\ref{app:algorithm}.

\section{Analysis}
\label{sec:analysis}

We analyze a single adapted layer and suppress the layer index. The data-weighted distillation error is
\begin{equation}
    \mathcal{E}(B,A) = \big\|\tWT - BA\Sigma_x^{1/2}\big\|_F^2 ,
    \label{eq:analysis_error}
\end{equation}
with $\tWT = \DWT\Sigma_x^{1/2} = \Ut\widetilde\Sigma_T \widetilde V_T^\top$ its compact SVD.

\subsection{Projector-Based Error Decomposition}

Our first result is an exact projector-based decomposition of the data-weighted distillation error.

\begin{theorem}[Distillation error decomposition]
\label{thm:decomp}
Let $B \in \mathbb{R}^{\dout\times r}$ have full column rank and let $P_B = B(B^\top B)^{-1}B^\top$ be the orthogonal projector onto $\colspan(B)$. Let $\Utr$ contain the top $r$ left singular vectors of $\tWT$ and let $\mathcal{R} := \sum_{i>r}\sigt_i^2$ be the rank-$r$ tail energy. Then
\begin{equation}
    \mathcal{E}(B,A) \;=\; \mathcal{S}(B) \;+\; \mathcal{C}(B,A) \;+\; \mathcal{R} ,
    \label{eq:three_term}
\end{equation}
where
\begin{align}
    \mathcal{S}(B) &:= \big\|(I-P_B)\tWT\big\|_F^2 \;-\; \mathcal{R} \;\geq\; 0, \\
    \mathcal{C}(B,A) &:= \big\|P_B\tWT - BA\Sigma_x^{1/2}\big\|_F^2 \;\geq\; 0 .
\end{align}
The terms are interpretable as: \emph{(I) $\mathcal{S}(B)$ — subspace misalignment}, depending only on $\colspan(B)$ and irreducible without rotating $B$; \emph{(II) $\mathcal{C}(B,A)$ — within-subspace coefficient mismatch}, eliminable by choosing $A$ optimally given $B$; and \emph{(III) $\mathcal{R}$ — rank residual}, fixed by the rank budget. Furthermore, if $\sigt_r>\sigt_{r+1}$, then $\mathcal{S}(B)=0$ if and only if $\colspan(B)=\colspan(\Utr)$. Under boundary degeneracy, $\mathcal{S}(B)=0$ for any optimal rank-$r$ left singular subspace within the degenerate eigenspace; SAD-LoRA uses the empirical SVD target $\Utr$ as the alignment reference.
\end{theorem}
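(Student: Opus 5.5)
The plan is to split the residual $\tWT - BA\Sigma_x^{1/2}$ orthogonally with $P_B$ and then read off the three terms.

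\textbf{Orthogonal split.} The key observation is that $BA\Sigma_x^{1/2}$ has column space contained in $\colspan(B)$, so $P_B BA\Sigma_x^{1/2}=BA\Sigma_x^{1/2}$ and $(I-P_B)BA\Sigma_x^{1/2}=0$. Writing
\[
\tWT - BA\Sigma_x^{1/2} = (I-P_B)\tWT + \bigl(P_B\tWT - BA\Sigma_x^{1/2}\bigr),
\]
the first summand has columns in $\colspan(B)^\perp$ and the second has columns in $\colspan(B)$. Their Frobenius inner product is therefore $\operatorname{tr}\bigl(\tWT^\top (I-P_B) P_B(\cdots)\bigr)=0$, since $(I-P_B)P_B=0$. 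Pythagoras then gives
\[
\mathcal{E}(B,A)=\|(I-P_B)\tWT\|_F^2+\mathcal{C}(B,A).
\]
Adding and subtracting $\mathcal{R}$ yields \eqref{eq:three_term} with the stated definitions of $\mathcal{S}$ and $\mathcal{C}$; this identity is purely algebraic.

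\textbf{Nonnegativity and interpretation.} $\mathcal{C}\ge 0$ holds trivially. For $\mathcal{S}(B)\ge0$, I would note that $P_B\tWT$ has rank at most $r$. By Eckart--Young--Mirsky,
\[
\|(I-P_B)\tWT\|_F^2=\|\tWT-P_B\tWT\|_F^2\ge \min_{\operatorname{rank}M\le r}\|\tWT-M\|_F^2=\mathcal{R}.
\]
Equivalently, $\|(I-P_B)\tWT\|_F^2=\|\tWT\|_F^2-\operatorname{tr}(P_B\tWT\tWT^\top)$, and Ky Fan's maximum principle bounds $\operatorname{tr}(P\,\tWT\tWT^\top)\le\sum_{i\le r}\sigt_i^2$ over rank-$r$ projectors. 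The trace form is the one I would use for the equality case.

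For the interpretive claims, $\mathcal{S}$ visibly depends on $B$ only through $P_B$, i.e.\ through $\colspan(B)$. For $\mathcal{C}$, I would show it can be driven to $0$ by choosing $A$ with $BA\Sigma_x^{1/2}=P_B\tWT=P_B\DWT\Sigma_x^{1/2}$, namely $A=(B^\top B)^{-1}B^\top\DWT$. This is valid without requiring $\Sigma_x$ to be invertible, since the identity holds after right-multiplying by $\Sigma_x^{1/2}$.

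\textbf{Equality case (main step).} $\mathcal{S}(B)=0$ means $\operatorname{tr}(P_B M)=\sum_{i\le r}\lambda_i(M)$ with $M=\tWT\tWT^\top$. Expanding $M=\sum_i \sigt_i^2 u_iu_i^\top$, I would write
\[
\operatorname{tr}(P_BM)=\sum_i \sigt_i^2 w_i,\qquad w_i=\|P_Bu_i\|^2\in[0,1],\qquad \sum_i w_i\le r.
\]
Maximizing this linear function over such weights, when $\sigt_r>\sigt_{r+1}$, forces $w_i=1$ for $i\le r$: any deficit would have to be moved onto indices with strictly smaller weight, losing value. Hence $u_1,\dots,u_r\in\colspan(B)$, and by dimension count $\colspan(B)=\colspan(\Utr)$. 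The converse follows by direct computation, since then $(I-P_B)\tWT$ retains exactly the tail components.

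The care point is the case where some $\sigt_i=0$ for $i\le r$, i.e.\ $\operatorname{rank}\tWT<r$. Then $\sigt_r=0=\sigt_{r+1}$, so the gap hypothesis already excludes it.

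\textbf{Degenerate case.} Under boundary degeneracy, the same weight argument shows that equality holds exactly when $\colspan(B)$ contains the eigenvectors with $\sigt_i>\sigt_r$ and its remaining dimensions lie in the $\sigt_r$-eigenspace. That is precisely the family of optimal rank-$r$ left singular subspaces, which gives the final remark.
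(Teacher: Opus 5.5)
Your proposal is correct and follows the paper's proof: the same Pythagorean split via $P_B$ (using $P_B BA\Sigma_x^{1/2}=BA\Sigma_x^{1/2}$), followed by the Eckart--Young/trace bound $\|P\tWT\|_F^2\le\sum_{i\le r}\sigt_i^2$ over rank-$r$ projectors, which gives $\mathcal{S}\ge 0$ and its zero set. Your weight argument with $w_i=\|P_B u_i\|^2$ makes the equality case rigorous, which the paper only asserts (its Step~2 even states the ``iff'' without the gap hypothesis). Your choice $A=(B^\top B)^{-1}B^\top\DWT$ also removes the paper's assumption that $\Sigma_x$ is invertible.
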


\begin{proof}[Sketch]
Since $\colspan(BA\Sigma_x^{1/2}) \subseteq \colspan(B)$, Pythagoras gives
\[
\mathcal{E}(B,A)
=
\|(I-P_B)\tWT\|_F^2
+
\|P_B\tWT - BA\Sigma_x^{1/2}\|_F^2 .
\]
By Eckart--Young--Mirsky, the first term is lower-bounded by $\mathcal{R}$, with equality at a rank-$r$ optimal left singular subspace, up to non-uniqueness under singular-value degeneracy. This yields Eq.~\ref{eq:three_term}; the second term is nonnegative by definition. Full proof in Appendix~\ref{app:proof_thm1}.
\end{proof}

Theorem~\ref{thm:decomp} clarifies why output-level KD can waste rank budget. Standard KD updates $B$ and $A$ through output gradients and can reduce coefficient error inside a misaligned subspace while leaving $\mathcal{S}$ large. SAD-LoRA addresses this by adding $\Lalign$ (Section~\ref{sec:spectral_alignment_distillation}), which explicitly targets the subspace term.

\textbf{$\Lalign$ as a Grassmannian surrogate for $\mathcal{S}$.}
The alignment loss
\begin{equation}
    \Lalign \;=\; \tfrac{1}{r}\!\sum_{i=1}^{r}\sin^2\theta_i
    \label{eq:alignment_grassmann}
\end{equation}
is the normalized squared chordal distance on $\mathrm{Gr}(r,\dout)$ between $\colspan(B)$ and the selected target subspace $\colspan(\Utr)$. When the top-$r$ subspace is unique, $\Lalign=0$ and $\mathcal{S}(B)=0$ have the same zero set. Under degeneracy, $\mathcal{S}$ may vanish for multiple optimal subspaces, while $\Lalign$ anchors the adapter to the empirical SVD target used by SAD-LoRA. Thus $\Lalign$ is not equal to $\mathcal{S}$; it is an unweighted, differentiable surrogate that is cheaper and more stable than estimating spectrally weighted angles directly.

\textbf{Relation to $\Lcoeff$.} The principled, theory-aligned coefficient loss is the data-weighted form
\begin{equation}
    \Lcoeff^{\mathrm{dw}} = \frac{\big\|\sigma\!\left(R_B A \Sigma_x^{1/2}\right) - \sigt_{T,r}\big\|_2^2}{\big\|\sigt_{T,r}\big\|_2^2 + \varepsilon} ,
    \label{eq:Lcoeff_dw}
\end{equation}
which corresponds to matching the within-subspace data-weighted spectrum implied by $\mathcal{C}$. Computing it requires applying $\Sigma_x^{1/2}$ (or equivalently right-multiplying by the calibration sketch $X_c$) at every step, which doubles the per-step cost of the coefficient term. Because our ablations show $\Lcoeff$ to be auxiliary rather than load-bearing, we adopt the cheaper unweighted proxy $\Lcoeff = \|\sigma(R_B A) - \sigt_{T,r}\|_2^2 / (\|\sigt_{T,r}\|_2^2 + \varepsilon)$ in our main experiments. The proxy equals $\Lcoeff^{\mathrm{dw}}$ when $\Sigma_x=I$; in the anisotropic case it is only an approximation to the data-weighted coefficient objective. We report results with the unweighted form throughout; preliminary runs with $\Lcoeff^{\mathrm{dw}}$ show no consistent advantage on GLUE.

\subsection{Optimal Subspace and Rank Sufficiency}

\begin{proposition}[Data-weighted teacher subspace]
\label{prop:data_weighted_subspace}
Among all rank-$r$ matrices $M$, an optimal minimizer of $\|\tWT - M\|_F^2$ has column space $\colspan(\Utr)$ and residual $\sum_{i>r}\sigt_i^2$. If $\sigt_r>\sigt_{r+1}$, this optimal column space is unique.
\end{proposition}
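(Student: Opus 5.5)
The plan is to reduce the proposition to the Eckart--Young--Mirsky theorem, using the projector identity already employed in the proof of Theorem~\ref{thm:decomp}. Fix any matrix $M$ of rank at most $r$ and write $P_M$ for the orthogonal projector onto $\colspan(M)$. Since $\colspan(M)\subseteq\colspan(P_M)$, the Pythagorean splitting gives
\[
\|\tWT-M\|_F^2=\|(I-P_M)\tWT\|_F^2+\|P_M\tWT-M\|_F^2\;\ge\;\|(I-P_M)\tWT\|_F^2 .
\]
So the problem reduces to minimizing $\|(I-P)\tWT\|_F^2$ over orthogonal projectors $P$ of rank at most $r$, and then choosing $M=P\tWT$ to make the second term vanish.

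For the projector problem, expand
\[
\|(I-P)\tWT\|_F^2=\|\tWT\|_F^2-\operatorname{tr}\big(P\,\tWT\tWT^\top\big).
\]
By Ky Fan's maximum principle, $\operatorname{tr}(P G)$ with $G=\tWT\tWT^\top=\Ut\widetilde\Sigma_T^2\Ut^\top$ is at most $\sum_{i\le r}\sigt_i^2$ over rank-$\le r$ projectors. Hence $\|\tWT-M\|_F^2\ge\sum_{i>r}\sigt_i^2$. The bound is attained by $P=\Utr\Utr^\top$ and $M=\Utr\Utr^\top\tWT$, which is the truncated SVD; its column space is $\colspan(\Utr)$, provided $\sigt_r>0$ (otherwise the statement is read with rank at most $r$). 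This settles the existence claim and the value of the residual.

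\textbf{Uniqueness} is the main obstacle. Suppose $\sigt_r>\sigt_{r+1}$ and $M$ is optimal. Equality must then hold in both inequalities above. First, $\operatorname{tr}(P_MG)=\sum_{i\le r}\sigt_i^2$. Write $P_M=\sum_j p_jp_j^\top$ with orthonormal $p_j$, and expand in the eigenbasis of $G$ to get
\[
\operatorname{tr}(P_MG)=\sum_i\sigt_i^2\,w_i,\qquad w_i=\|P_M u_i\|^2\in[0,1],\qquad \sum_i w_i\le r .
\]
Because of the strict gap, any weight placed on an index $i>r$, or any deficiency in $w_1,\dots,w_r$, strictly lowers the sum. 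Maximality therefore forces $w_i=1$ for $i\le r$ and $w_i=0$ for $i>r$. Since $\sigt_r>\sigt_{r+1}\ge 0$, the indices $i\le r$ all carry strictly positive weight $\sigt_i^2$, so this argument also covers the null directions of $\tWT$. It follows that $\colspan(\Utr)\subseteq\colspan(M)$, and the dimension count ($\operatorname{rank} M\le r$) gives equality.

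I would present this weight-majorization step carefully, since it is exactly where the gap hypothesis enters. Equality in the first inequality gives $M=P_M\tWT$, which is consistent with the conclusion but not needed for the column-space claim.
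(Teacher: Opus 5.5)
Your proof is correct, and it takes a more self-contained route than the paper's. The paper's proof is a one-line appeal to Eckart--Young--Mirsky. The minimum over rank-$r$ matrices is $\sum_{i>r}\sigt_i^2$, attained by the truncation $\Utr\widetilde\Sigma_{T,r}\widetilde V_{T,r}^\top$, whose column space is $\colspan(\Utr)$. The uniqueness clause under $\sigt_r>\sigt_{r+1}$ is never argued there; it is implicitly left to the standard uniqueness of the truncated SVD under a strict gap.

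You instead re-derive the lower bound from the Pythagorean split and Ky Fan's principle, which is the same projector reasoning the paper uses in Step~2 of the proof of Theorem~\ref{thm:decomp}. You then get uniqueness from the equality case. Your route gives an actual proof of the uniqueness claim and shows exactly where the gap hypothesis enters. It also handles the degenerate case $\sigt_r=0$ (rank exactly $r$ versus at most $r$), which the paper passes over. Your equality-case analysis is also what is needed to justify the ``iff'' that Step~2 of that proof asserts without any gap hypothesis. The paper's route buys only brevity.

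Two points to tighten in the write-up:
\begin{itemize}
\item Make the informal claim that weight on $i>r$ or a deficiency in $w_1,\dots,w_r$ ``strictly lowers the sum'' precise with the chain $\sum_i\sigt_i^2w_i-\sum_{i\le r}\sigt_i^2\le(\sigt_r^2-\sigt_{r+1}^2)(a-r)$, where $a=\sum_{i\le r}w_i\le r$. This chain follows from $\sigt_i\ge\sigt_r$ for $i\le r$, $\sigt_i\le\sigt_{r+1}$ for $i>r$, and the budget $\sum_{i>r}w_i\le r-a$. Optimality then forces $a=r$, hence $w_i=1$ for $i\le r$, and the budget gives $w_i=0$ for $i>r$.
\item Take $\{u_i\}$ to be a full orthonormal eigenbasis of $\mathbb{R}^{\dout}$, completing the compact $\Ut$, so that $\sum_i w_i=\operatorname{tr}P_M=\operatorname{rank}M\le r$ holds exactly.
\end{itemize}
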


\begin{proof}
Direct from Eckart--Young--Mirsky; see Appendix~\ref{app:proof_prop1}.
\end{proof}

The relevant low-rank target is therefore not the unweighted teacher update, but the teacher update composed with $\Sigma_x^{1/2}$. Combined with Theorem~\ref{thm:decomp}, this identifies $\colspan(\Utr)$ as an optimal minimizer of $\mathcal{S}(B)$, unique when $\sigt_r>\sigt_{r+1}$, and justifies $\Lalign$ as its Grassmannian surrogate.

\begin{proposition}[Rank sufficiency]
\label{prop:rank_suff}
For any $\epsilon>0$, the minimum rank required to drive $\mathcal{E}(B,A) < \epsilon$ over all $B,A$ is
\begin{equation}
    r^{*}_\epsilon(\DWT,\Sigma_x) = \min\!\left\{k : \sum_{i>k}\sigt_i^2 < \epsilon\right\} .
    \label{eq:rank_suff}
\end{equation}
\end{proposition}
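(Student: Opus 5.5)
The plan is to show that, for each rank $k$, the smallest achievable error is exactly the tail energy:
\[
\min_{B\in\mathbb{R}^{\dout\times k},\,A\in\mathbb{R}^{k\times\din}}\mathcal{E}(B,A)=\sum_{i>k}\sigt_i^2 .
\]
The claimed formula for $r^*_\epsilon$ then follows immediately. The tail energy is nonincreasing in $k$, so a rank-$k$ adapter can achieve $\mathcal{E}<\epsilon$ if and only if $\sum_{i>k}\sigt_i^2<\epsilon$. The minimum such $k$ is $r^*_\epsilon$. The set of such $k$ is nonempty, because the tail vanishes once $k\ge\operatorname{rank}(\tWT)$.

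\emph{Lower bound.} For any $B,A$ of inner dimension $k$, the matrix $M=BA\Sigma_x^{1/2}$ has rank at most $k$. Proposition~\ref{prop:data_weighted_subspace}, i.e.\ Eckart--Young--Mirsky applied at rank $k$, then gives $\mathcal{E}(B,A)=\|\tWT-M\|_F^2\ge\sum_{i>k}\sigt_i^2$. Alternatively, when $B$ has full column rank, Theorem~\ref{thm:decomp} yields the same bound, since $\mathcal{S},\mathcal{C}\ge 0$. If $B$ is rank deficient, replace it by a full-rank $B'$ whose column space contains $\colspan(B)$; the achievable error can only decrease.

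\emph{Achievability.} Choose $B=\Ut[:,1{:}k]$. We need an $A$ with $BA\Sigma_x^{1/2}=P_B\tWT$, which makes $\mathcal{C}=0$ and $\mathcal{S}=0$, leaving exactly $\mathcal{R}$. Take $A=\Ut[:,1{:}k]^\top\DWT$. Then
\[
BA\Sigma_x^{1/2}=\Ut[:,1{:}k]\,\Ut[:,1{:}k]^\top\DWT\Sigma_x^{1/2}=P_B\tWT .
\]
This is the rank-$k$ truncated SVD of $\tWT$. The construction does not need $\Sigma_x$ to be invertible, because we never solve for $A$ through $\Sigma_x^{-1/2}$. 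We simply set $A=B^\top\DWT$.

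\emph{Main obstacle.} The substantive point is achievability when $\Sigma_x$ is singular. Naively one would set $A$ from the truncated SVD times $\Sigma_x^{-1/2}$, which fails here. Projecting $\DWT$ itself, as above, avoids the issue. A secondary care point is the strict inequality $<\epsilon$. Because the minimum over $(B,A)$ is attained, "can drive $\mathcal{E}$ below $\epsilon$" coincides exactly with "tail $<\epsilon$", with no infimum or closure issues.
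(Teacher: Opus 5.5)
Your proof is correct and follows the paper's route. You get the lower bound $\mathcal{E}(B,A)\ge\sum_{i>k}\sigt_i^2$ from Eckart--Young--Mirsky (the paper routes it through Theorem~\ref{thm:decomp}), attain it at $B=\Ut[:,1{:}k]$, and pick the smallest $k$ whose tail falls below $\epsilon$. Your choice $A=B^\top\DWT$ tightens the argument, since the paper's $A^*$ uses $(\Sigma_x^{1/2})^{-1}$ and so tacitly needs $\Sigma_x$ invertible, which fails for the empirical covariance whenever $n\le\din$. Applying Eckart--Young--Mirsky directly to $BA\Sigma_x^{1/2}$ also covers rank-deficient $B$, which Theorem~\ref{thm:decomp} excludes.
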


\begin{proof}
At rank $k$, Eckart--Young--Mirsky gives the minimum attainable error $\sum_{i>k}\sigt_i^2$. The stated criterion selects the smallest $k$ for which this residual is below $\epsilon$. See Appendix~\ref{app:proof_prop2}.
\end{proof}

This is a layerwise, data-dependent rank criterion: layers with sharp spectral decay need fewer ranks, while layers with shallow decay require more. In the main experiments, we use a uniform rank $r$ across layers to match prior LoRA baselines; Eq.~\ref{eq:rank_suff} serves as a diagnostic and motivates future rank-allocation variants.

\textbf{Implication for intruder dimensions.}
Any component of $B$ outside the selected target subspace is penalized by $\Lalign$. Under $\alpha>0$, the optimizer must trade off any output-level gain from such directions against the geometric penalty, providing an explicit mechanism against intruder dimensions \citep{shuttleworth2024lora}.

\section{Experiments}
\label{sec:experiments}

We evaluate SAD-LoRA in two settings: controlled synthetic experiments that directly test the decomposition in Theorem~\ref{thm:decomp}, and RoBERTa-based GLUE distillation across ranks $r\in\{1,2,4,8,16\}$. We then ablate alignment and coefficient matching to identify which spectral component drives the gains.

\subsection{Experimental Setup}
\label{sec:experimental_setup}

\textbf{Tasks and metrics.} We evaluate on six GLUE tasks \citep{wang2018glue}: SST-2, MRPC, STS-B, CoLA, QNLI, and RTE. We report accuracy for SST-2, QNLI, and RTE; F1 for MRPC; Spearman correlation for STS-B; and Matthews correlation (MCC) for CoLA. All GLUE results are mean and standard deviation over three seeds.

\textbf{Models and adapted modules.} The student is RoBERTa-base \citep{liu2020roberta} with LoRA adapters on the query and value projections in each Transformer block. Functional distillation logits come from a RoBERTa-large teacher. Because SAD-LoRA requires teacher updates in the same parameter space as the student, spectral targets are computed from a same-architecture RoBERTa-base reference fine-tuned on the same task; the larger teacher continues to supply $\Lkd$. We evaluate $r\in\{1,2,4,8,16\}$.

\textbf{Compared methods.} All baselines are reproduced in the same training pipeline with matched optimizers and schedules: \emph{KD-LoRA} (KL distillation with LoRA \citep{azimi2024kdlora}), \emph{Logit-MSE} (replaces KL with MSE on logits), \emph{NN-Init} (Neural Nuggets \citep{zhong2024seeking}, SVD initialization from $\DWT$ without alignment objective), and \emph{PiSSA-Init} (principal-singular-vector initialization from $W_0$ \citep{meng2024pissa}). We additionally evaluate two SAD-LoRA ablations: \emph{SAD-LoRA-Align} ($\beta=0$, alignment only) and \emph{SAD-LoRA-Coeff} ($\alpha=0$, coefficient only). Hyperparameters are matched across methods; SAD-LoRA uses $\alpha=0.2$, $\beta=0.05$, $\tau=4$, and calibration size $n=1024$, with optimization details in Appendix~\ref{app:experimental_details}.

\subsection{Controlled Spectral Validation}
\label{sec:controlled_spectral_validation}

The synthetic experiments test whether Theorem~\ref{thm:decomp} predicts the actual error budget and whether $\Lalign$ corrects the predicted failure mode. We generate teacher updates with controlled spectral gap $\gamma=\sigt_r/\sigt_{r+1}$, train rank-$4$ adapters, and compute $\mathcal{S}$, $\mathcal{C}$, and $\mathcal{R}$ exactly from the trained $(B,A)$ and synthetic teacher.

\begin{figure}[t!]
    \centering
    \includegraphics[width=\columnwidth]{images/fig_synthetic_comb.jpg}
    \caption{Controlled spectral validation. \emph{Left:} under a flat teacher spectrum, KD-LoRA spends most of its error on subspace misalignment $\mathcal{S}$ ($51\%$), whereas SAD-LoRA drives $\mathcal{S}$ nearly to zero, leaving the rank residual $\mathcal{R}$ ($27\%$) and coefficient mismatch $\mathcal{C}$ ($15\%$). \emph{Right:} final alignment $\tfrac{1}{r}\|Q_B^\top\Utr\|_F^2$ across spectral gaps. Both methods align under sharp spectra, but only SAD-LoRA remains aligned in the flat case ($\gamma=1$).}
    \label{fig:synthetic_validation}
\end{figure}

\begin{table}[t]
\centering
\caption{Synthetic three-term error decomposition at $r=4$ with flat teacher spectrum ($\gamma=1$) and identity covariance.}
\label{tab:synthetic_decomp}

\small
\setlength{\tabcolsep}{5pt}
\renewcommand{\arraystretch}{1.05}
\begin{tabular}{@{}lcccc@{}}
\toprule
Method &
\shortstack{$\mathcal{S}$\\subspace} &
\shortstack{$\mathcal{C}$\\coeff.} &
\shortstack{$\mathcal{R}$\\residual} &
Total \\
\midrule
KD-LoRA   & $51\%$ & $22\%$ & $27\%$ & $100\%$ \\
SAD-LoRA & $\mathbf{\approx 0\%}$ & $15\%$ & $27\%$ & $\mathbf{42\%}$ \\
\bottomrule
\end{tabular}
\end{table}

\begin{table*}[ht!]
\centering
\caption{
GLUE results at ranks $r=4$ and $r=8$, reported as mean $\pm$ std over three seeds.
SST-2, QNLI, and RTE use accuracy; MRPC uses F1; STS-B uses Spearman's $\rho$; and CoLA uses Matthews correlation. Best results/near-ties under seed variability are also bolded.
}
\label{tab:glue_r4_r8}

\vspace{-0.5em}
\footnotesize
\setlength{\tabcolsep}{3.2pt}
\renewcommand{\arraystretch}{0.90}

\begin{tabular*}{\textwidth}{@{\extracolsep{\fill}}lcccccc@{}}
\toprule
Method & SST-2 & MRPC & STS-B & CoLA & QNLI & RTE \\
\midrule

\multicolumn{7}{@{}l}{\textit{Rank $r=4$}} \\
\midrule
KD-LoRA
& $93.3\!\pm\!0.4$
& $87.8\!\pm\!0.9$
& $0.789\!\pm\!0.033$
& $0.426\!\pm\!0.013$
& $91.3\!\pm\!0.2$
& $65.0\!\pm\!1.6$ \\

Logit-MSE
& $93.8\!\pm\!0.2$
& $87.2\!\pm\!0.2$
& $0.789\!\pm\!0.033$
& $0.423\!\pm\!0.013$
& $90.8\!\pm\!0.6$
& $59.0\!\pm\!3.7$ \\

NN-Init
& $93.4\!\pm\!0.4$
& $\mathbf{90.3\!\pm\!0.6}$
& $\mathbf{0.887\!\pm\!0.003}$
& $0.555\!\pm\!0.008$
& $\mathbf{92.5\!\pm\!0.1}$
& $69.2\!\pm\!1.0$ \\

PiSSA-Init
& $93.7\!\pm\!0.8$
& $90.2\!\pm\!0.6$
& $0.880\!\pm\!0.004$
& $0.519\!\pm\!0.015$
& $92.1\!\pm\!0.1$
& $66.1\!\pm\!1.2$ \\

\addlinespace[0.15em]
SAD-LoRA
& $94.3\!\pm\!0.2$
& $89.9\!\pm\!0.3$
& $0.884\!\pm\!0.002$
& $\mathbf{0.558\!\pm\!0.006}$
& $92.1\!\pm\!0.0$
& $69.9\!\pm\!1.3$ \\

SAD-LoRA-Align
& $94.1\!\pm\!0.1$
& $90.1\!\pm\!0.2$
& $\mathbf{0.887\!\pm\!0.003}$
& $0.553\!\pm\!0.003$
& $\mathbf{92.5\!\pm\!0.1}$
& $\mathbf{70.0\!\pm\!1.3}$ \\

SAD-LoRA-Coeff
& $\mathbf{94.5\!\pm\!0.1}$
& $90.0\!\pm\!0.4$
& $0.885\!\pm\!0.002$
& $0.539\!\pm\!0.009$
& $92.2\!\pm\!0.0$
& $63.1\!\pm\!6.8$ \\

\midrule
\multicolumn{7}{@{}l}{\textit{Rank $r=8$}} \\
\midrule
KD-LoRA
& $93.7\!\pm\!0.2$
& $90.2\!\pm\!0.7$
& $0.847\!\pm\!0.018$
& $0.478\!\pm\!0.016$
& $92.0\!\pm\!0.2$
& $67.4\!\pm\!1.1$ \\

Logit-MSE
& $93.5\!\pm\!0.2$
& $89.8\!\pm\!0.2$
& $0.847\!\pm\!0.018$
& $0.473\!\pm\!0.019$
& $91.9\!\pm\!0.3$
& $65.2\!\pm\!2.8$ \\

NN-Init
& $93.8\!\pm\!0.1$
& $90.2\!\pm\!0.9$
& $0.892\!\pm\!0.004$
& $0.554\!\pm\!0.005$
& $\mathbf{92.8\!\pm\!0.2}$
& $72.1\!\pm\!1.4$ \\

PiSSA-Init
& $94.0\!\pm\!0.5$
& $\mathbf{91.1\!\pm\!0.6}$
& $0.887\!\pm\!0.006$
& $0.538\!\pm\!0.015$
& $92.3\!\pm\!0.1$
& $68.8\!\pm\!2.1$ \\

\addlinespace[0.15em]
SAD-LoRA
& $\mathbf{94.5\!\pm\!0.2}$
& $90.9\!\pm\!0.4$
& $0.891\!\pm\!0.003$
& $0.558\!\pm\!0.005$
& $92.5\!\pm\!0.1$
& $71.8\!\pm\!0.8$ \\

SAD-LoRA-Align
& $93.6\!\pm\!0.2$
& $90.1\!\pm\!0.3$
& $\mathbf{0.893\!\pm\!0.003}$
& $\mathbf{0.562\!\pm\!0.006}$
& $\mathbf{92.8\!\pm\!0.1}$
& $\mathbf{72.2\!\pm\!0.6}$ \\

SAD-LoRA-Coeff
& $93.7\!\pm\!0.2$
& $90.5\!\pm\!0.2$
& $0.891\!\pm\!0.003$
& $0.537\!\pm\!0.005$
& $92.5\!\pm\!0.1$
& $68.0\!\pm\!0.6$ \\

\bottomrule
\end{tabular*}

\vspace{-0.3cm}
\end{table*}

\textbf{Decomposition (Table~\ref{tab:synthetic_decomp}).}
The flat-spectrum case is difficult for output-level KD because the teacher update provides weak directional preference. KD-LoRA spends most of its error budget on subspace misalignment ($\mathcal{S}=51\%$), whereas SAD-LoRA drives $\mathcal{S}$ to nearly zero while leaving the irreducible rank residual $\mathcal{R}$ unchanged. The total error drops from $100\%$ to $42\%$.

\textbf{Alignment across spectral gaps.} Fig.~\ref{fig:synthetic_validation} (right) reports final subspace alignment $\tfrac{1}{r}\|Q_B^\top\Utr\|_F^2$ for sharp ($\gamma=13.4$), gradual ($\gamma=1.25$), and flat ($\gamma=1$) teacher spectra. Both methods recover the dominant subspace under sharp spectra ($1.00$ each). As the spectral gap shrinks, KD-LoRA degrades sharply: its final alignment drops to $0.49$ in the flat case, while SAD-LoRA remains at $1.00$. This supports the role of $\Lalign$: it is most useful when the teacher spectrum provides weak directional preference.

\subsection{Main GLUE Results}
\label{sec:main_glue_results}

We report exact results at $r=4$ and $r=8$, and show rank trajectories for $r\in\{1,2,4,8,16\}$.

\textbf{Rank-$4/8$ results (Table~\ref{tab:glue_r4_r8}).}
At $r=4$, SAD-LoRA-family variants are strongest or competitive on five of six tasks. SAD-LoRA-Coeff gives the best SST-2 score ($94.5$), full SAD-LoRA gives the best CoLA score ($0.558$), and SAD-LoRA-Align matches NN-Init on STS-B ($0.887$) and QNLI ($92.5$) while giving the best RTE score ($70.0$). MRPC is the exception: NN-Init remains strongest ($90.3$), with SAD-LoRA variants within $0.4$ F1. Relative to KD-LoRA, full SAD-LoRA improves STS-B from $0.789$ to $0.884$, CoLA from $0.426$ to $0.558$, and RTE from $65.0$ to $69.9$. Notably, SAD-LoRA at $r=4$ already exceeds KD-LoRA at $r=8$ on STS-B and CoLA, indicating that subspace selection can compensate for additional rank.
At $r=8$, margins shrink on saturated tasks, but the same pattern remains. Full SAD-LoRA gives the best SST-2 score ($94.5$), while SAD-LoRA-Align is strongest on STS-B ($0.893$), CoLA ($0.562$), and RTE ($72.2$), and ties NN-Init on QNLI ($92.8$). PiSSA-Init remains best on MRPC ($91.1$), with full SAD-LoRA close behind ($90.9$). The largest gains over KD-LoRA again occur on STS-B and CoLA, supporting the hypothesis that data-weighted teacher-update directions matter most when the adapter must choose a small task-relevant subspace.

\begin{figure*}[!t]
    \centering
    \includegraphics[width=0.9\textwidth]{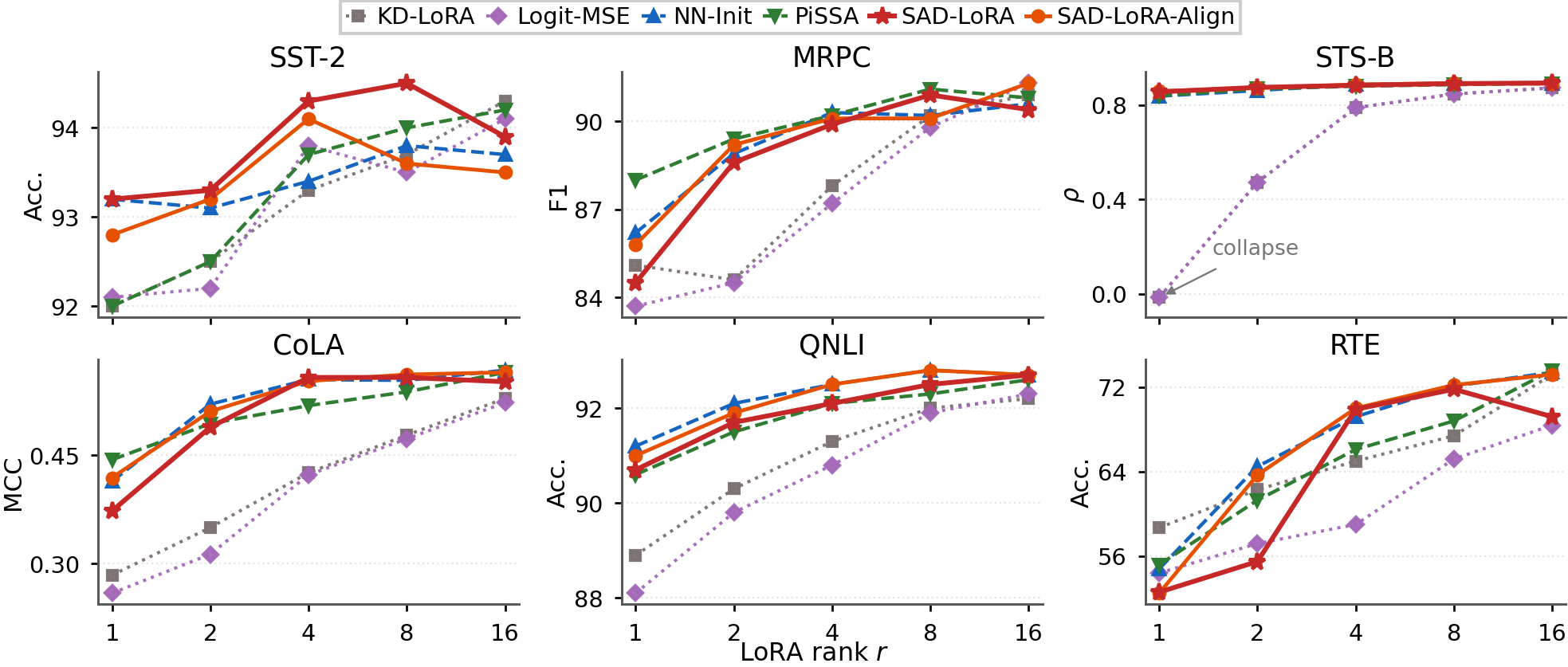}
    \caption{Rank efficiency on GLUE. Mean validation performance over three seeds as the LoRA rank varies from $1$ to $16$. SAD-LoRA variants are most effective in low-rank regimes. STS-B and CoLA show the clearest separation; SST-2 and QNLI are near-saturated; on MRPC, PiSSA-Init remains strongest at moderate rank. Logit-MSE collapses on STS-B at $r=1$ (Spearman near zero), highlighting the value of subspace-aware initialization for rank-limited correlation tasks.}
    \label{fig:rank_curves}
    \vspace{-0.4cm}
\end{figure*}

\begin{figure}[!t]
    \centering
    \includegraphics[width=0.75\linewidth]{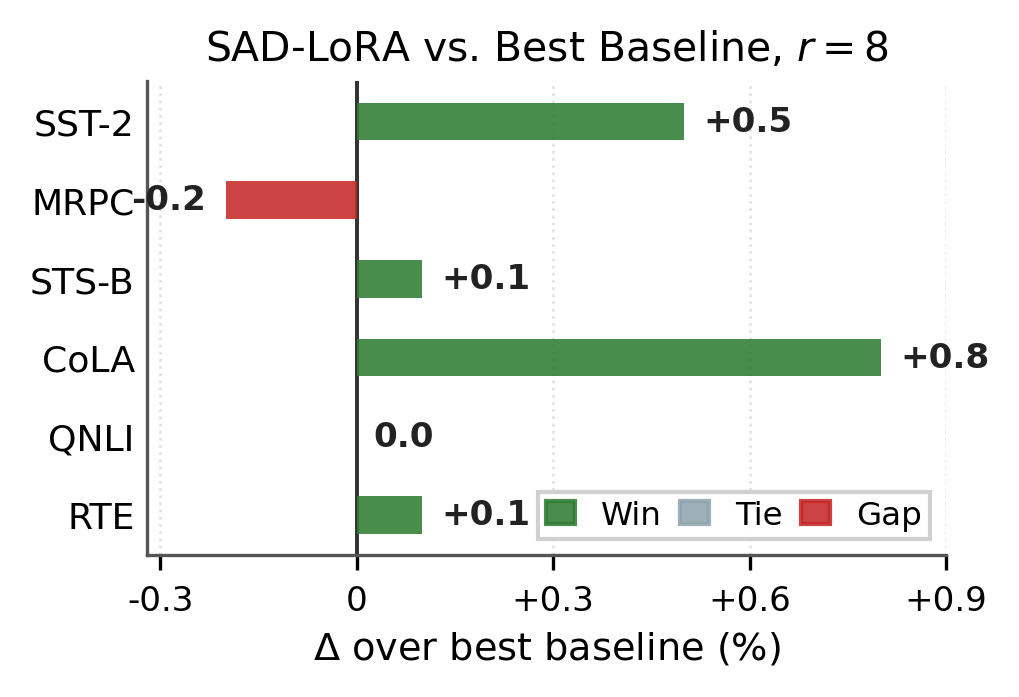}
    \caption{Best SAD-LoRA-family result minus best non-SAD baseline at $r=8$. Positive values indicate an advantage for SAD-LoRA. Correlation and MCC differences are shown in percentage-point units.}
    \label{fig:wins_r8}
    \vspace{-0.4cm}
\end{figure}

\textbf{Rank trajectories (Fig.~\ref{fig:rank_curves}).}
STS-B gives the clearest rank-efficiency signal: at $r=1$, Logit-MSE collapses to $-0.016$ Spearman while SAD-LoRA already reaches $0.85$, and by $r=4$ SAD-LoRA-Align and NN-Init match the best-rank KD baselines. CoLA shows a similar low-rank gain, whereas SST-2 and QNLI are nearly saturated and MRPC favors PiSSA-Init at moderate rank.

\subsection{Ablations and Rank Efficiency}
\label{sec:ablation_rank_analysis}

The ablations isolate three effects: output-level KD (KD-LoRA, Logit-MSE), spectral initialization without training-time alignment (NN-Init, PiSSA-Init), and SAD-LoRA's two spectral terms separately.

\textbf{Subspace alignment is load-bearing.} Removing $\Lalign$ (SAD-LoRA-Coeff) is more damaging than removing $\Lcoeff$. At $r=8$, dropping $\Lalign$ costs $0.025$ MCC on CoLA ($0.562\to0.537$) and $4.2$ accuracy points on RTE ($72.2\to68.0$). The variance of SAD-LoRA-Coeff is also larger on small-data tasks ($\pm 6.8$ on RTE at $r=4$). This is consistent with Theorem~\ref{thm:decomp}: $\Lcoeff$ targets $\mathcal{C}$, which is bounded by the within-subspace projection of $\tWT$ and is therefore meaningful only after $\mathcal{S}$ has been driven small.

\textbf{Coefficient matching is auxiliary.} Removing $\Lcoeff$ improves or matches the full objective on STS-B, CoLA, QNLI, and RTE while the full objective is stronger on SST-2 and MRPC. This suggests that exact singular-value matching is more sensitive to weighting than directional alignment and is best used as a regularizer rather than the central mechanism.

\textbf{Per-task summary at $r=8$ (Fig.~\ref{fig:wins_r8}).} SAD-LoRA variants improve over the best non-SAD baseline on SST-2 ($+0.5$), STS-B ($+0.1$), CoLA ($+0.8$), and RTE ($+0.1$); tie on QNLI; and remain $0.2$ F1 below PiSSA-Init on MRPC.

\textbf{Synthesis.} The experiments support two conclusions. (i) The synthetic decomposition confirms Theorem~\ref{thm:decomp}: standard KD's distillation error is dominated by subspace misalignment under flat spectra, and SAD-LoRA removes this term. (ii) GLUE results show that this mechanism translates into downstream gains in low-rank, spectrally sensitive regimes, especially on STS-B and CoLA.

\section{Discussion and Limitations}
\label{sec:discussion}

SAD-LoRA's gains come primarily from controlling the adapter subspace during training, not merely from spectral initialization. This distinction is important because PiSSA-Init and NN-Init already use SVD-based initialization, yet they do not keep $\colspan(B)$ tied to the data-weighted teacher-update subspace. The synthetic decomposition shows that this geometric target removes the subspace-misalignment term that KD-LoRA does not explicitly penalize, while the GLUE results show corresponding rank-efficiency gains: at $r=4$, SAD-LoRA already matches or exceeds KD-LoRA at $r=8$ on STS-B and CoLA. The ablations further indicate that $\Lalign$ is the central mechanism. Coefficient matching is useful as an auxiliary regularizer---especially on tasks such as SST-2 and MRPC---but it is less robust than directional alignment and should be treated as optional rather than load-bearing.

\textbf{Limitations.}
SAD-LoRA requires a teacher update in the same weight space as the student adapter. When the functional teacher is larger than the student, we therefore use a same-architecture fine-tuned reference model to construct the spectral targets, adding one offline model to the pipeline. The method also requires a calibration pass to compute layerwise targets, although this cost is incurred once and adds no inference overhead. Empirically, the gains are not uniform: MRPC remains favorable to PiSSA-Init at moderate rank, suggesting that pretrained-weight principal structure can be preferable for some semantic-matching tasks. Finally, our experiments focus on encoder-model distillation on GLUE; extending teacher-update alignment to decoder-only LLMs and generation tasks remains future work.


\section{Conclusion}
\label{sec:conclusion}

We introduced SAD-LoRA, a spectral alignment framework for low-rank knowledge distillation. SAD-LoRA derives a layerwise target subspace from the data-weighted student-space reference update $\DWT\Sigma_x^{1/2}$ and regularizes the LoRA adapter to remain aligned with this subspace during training. The analysis decomposes the data-weighted distillation error into subspace misalignment, coefficient mismatch, and rank residual, showing why output-level KD alone can waste rank budget in a misaligned adapter subspace. Synthetic experiments verify this mechanism directly, and RoBERTa-based GLUE distillation shows measurable rank-efficiency gains, especially on STS-B and CoLA. Ablations indicate that principal-angle subspace alignment is the central component, while coefficient matching is useful but auxiliary.

\section*{Acknowledgment}
This work was supported by the Institute of Information \& Communications Technology Planning \& Evaluation (IITP) grant funded by the Korea government (MSIT) (No.~RS-2024-00443391, Korea-Japan Joint International Research on Deep Learning-based Autonomous Mobility Control Technology for Autonomous Mobile Robots).

\bibliography{icml}

@misc{han2024peftsurvey,
  title        = {Parameter-Efficient Fine-Tuning for Large Models: A Comprehensive Survey},
  author       = {Han, Zeyu and Gao, Chao and Liu, Jinyang and Zhang, Jeff and Zhang, Sai Qian},
  year         = {2024},
  eprint       = {2403.14608},
  archivePrefix= {arXiv},
  primaryClass = {cs.LG},
  url          = {https://arxiv.org/abs/2403.14608}
}

@article{hu2022lora,
  title={Lora: Low-rank adaptation of large language models.},
  author={Hu, Edward J and Shen, Yelong and Wallis, Phillip and Allen-Zhu, Zeyuan and Li, Yuanzhi and Wang, Shean and Wang, Liang and Chen, Weizhu and others},
  journal={Iclr},
  volume={1},
  number={2},
  pages={3},
  year={2022}
}

@article{shuttleworth2024lora,
  title={Lora vs full fine-tuning: An illusion of equivalence},
  author={Shuttleworth, Reece and Andreas, Jacob and Torralba, Antonio and Sharma, Pratyusha},
  journal={arXiv preprint arXiv:2410.21228},
  year={2024}
}

@article{soarez2026demystifying,
  title   = {Demystifying Low-Rank Knowledge Distillation in Large Language Models: Convergence, Generalization, and Information-Theoretic Guarantees},
  author  = {Soarez, Alberlucia Rafael and Kim, Daniel and Costa, Mariana and Torre, Alejandro},
  journal = {arXiv preprint arXiv:2603.22355},
  year    = {2026},
  doi     = {10.48550/arXiv.2603.22355},
}

@inproceedings{houlsby2019adapter,
  title = {Parameter-Efficient Transfer Learning for {NLP}},
  author = {Houlsby, Neil and Giurgiu, Andrei and Jastrzebski, Stanislaw and Morrone, Bruna and De Laroussilhe, Quentin and Gesmundo, Andrea and Attariyan, Mona and Gelly, Sylvain},
  booktitle = {Proceedings of the 36th International Conference on Machine Learning},
  pages = {2790--2799},
  year = {2019},
  volume = {97},
  series = {Proceedings of Machine Learning Research},
  publisher = {PMLR},

}

@inproceedings{li2021prefix,
  title = {Prefix-Tuning: Optimizing Continuous Prompts for Generation},
  author = {Li, Xiang Lisa and Liang, Percy},
  booktitle = {Proceedings of the 59th Annual Meeting of the Association for Computational Linguistics and the 11th International Joint Conference on Natural Language Processing},
  pages = {4582--4597},
  year = {2021},
  address = {Online},
  publisher = {Association for Computational Linguistics},
  doi = {10.18653/v1/2021.acl-long.353},

}

@inproceedings{lester2021prompt,
  title = {The Power of Scale for Parameter-Efficient Prompt Tuning},
  author = {Lester, Brian and Al-Rfou, Rami and Constant, Noah},
  booktitle = {Proceedings of the 2021 Conference on Empirical Methods in Natural Language Processing},
  pages = {3045--3059},
  year = {2021},
  address = {Online and Punta Cana, Dominican Republic},
  publisher = {Association for Computational Linguistics},
  doi = {10.18653/v1/2021.emnlp-main.243},

}

@inproceedings{benzaken2021bitfit,
  title = {{B}it{F}it: Simple Parameter-efficient Fine-tuning for Transformer-based Masked Language-models},
  author = {Ben Zaken, Elad and Goldberg, Yoav and Ravfogel, Shauli},
  booktitle = {Proceedings of the 60th Annual Meeting of the Association for Computational Linguistics},
  pages = {1--9},
  year = {2022},
  address = {Dublin, Ireland},
  publisher = {Association for Computational Linguistics},
  doi = {10.18653/v1/2022.acl-short.1},

}

@article{ding2023peftsurvey,
  title = {Parameter-efficient fine-tuning of large-scale pre-trained language models},
  author = {Ding, Ning and Qin, Yujia and Yang, Guang and Wei, Fuchao and Yang, Zonghan and Su, Yusheng and Hu, Shengding and Chen, Yulin and Chan, Chi-Min and Chen, Weize and Yi, Jing and Zhao, Weilin and Wang, Xiaozhi and Liu, Zhiyuan and Zheng, Hai-Tao and Chen, Jianfei and Liu, Yang and Tang, Jie and Li, Juanzi and Sun, Maosong},
  journal = {Nature Machine Intelligence},
  volume = {5},
  pages = {220--235},
  year = {2023},
  doi = {10.1038/s42256-023-00626-4},

}

@inproceedings{zhang2023adalora,
  title = {{AdaLoRA}: Adaptive Budget Allocation for Parameter-Efficient Fine-Tuning},
  author = {Zhang, Qingru and Chen, Minshuo and Bukharin, Alexander and Karampatziakis, Nikos and He, Pengcheng and Cheng, Yu and Chen, Weizhu and Zhao, Tuo},
  booktitle = {Proceedings of the 11th International Conference on Learning Representations},
  year = {2023},

}

@inproceedings{valipour2022dylora,
  title = {{DyLoRA}: Parameter-Efficient Tuning of Pre-trained Models using Dynamic Search-Free Low-Rank Adaptation},
  author = {Valipour, Mojtaba and Rezagholizadeh, Mehdi and Kobyzev, Ivan and Ghodsi, Ali},
  booktitle = {Proceedings of the 17th Conference of the European Chapter of the Association for Computational Linguistics},
  pages = {3274--3287},
  year = {2023},
  address = {Dubrovnik, Croatia},
  publisher = {Association for Computational Linguistics},
  doi = {10.18653/v1/2023.eacl-main.239},

}

@inproceedings{liu2024dora,
  title = {{DoRA}: Weight-Decomposed Low-Rank Adaptation},
  author = {Liu, Shih-Yang and Wang, Chien-Yi and Yin, Hongxu and Molchanov, Pavlo and Wang, Yu-Chiang Frank and Cheng, Kwang-Ting and Chen, Min-Hung},
  booktitle = {Proceedings of the 41st International Conference on Machine Learning},
  year = {2024},

}

@inproceedings{kopiczko2024vera,
  title = {{VeRA}: Vector-based Random Matrix Adaptation},
  author = {Kopiczko, Dawid J. and Blankevoort, Tijmen and Asano, Yuki M.},
  booktitle = {Proceedings of the 12th International Conference on Learning Representations},
  year = {2024},

}

@inproceedings{renduchintala2023tiedlora,
  title = {Tied-{LoRA}: Enhancing Parameter Efficiency of {LoRA} with Weight Tying},
  author = {Renduchintala, Adithya and Konuk, Tugrul and Kuchaiev, Oleksii},
  booktitle = {Proceedings of the 2024 Conference of the North American Chapter of the Association for Computational Linguistics: Human Language Technologies},
  pages = {8694--8705},
  year = {2024},
  publisher = {Association for Computational Linguistics},
  doi = {10.18653/v1/2024.naacl-long.481},

}

@inproceedings{li2024vblora,
  title = {{VB-LoRA}: Extreme Parameter Efficient Fine-Tuning with Vector Banks},
  author = {Li, Yang and Han, Shaobo and Ji, Shihao},
  booktitle = {Advances in Neural Information Processing Systems},
  volume = {37},
  year = {2024},

}

@inproceedings{dettmers2023qlora,
  title = {{QLoRA}: Efficient Finetuning of Quantized {LLMs}},
  author = {Dettmers, Tim and Pagnoni, Artidoro and Holtzman, Ari and Zettlemoyer, Luke},
  booktitle = {Advances in Neural Information Processing Systems},
  volume = {36},
  year = {2023},

}

@inproceedings{li2023loftq,
  title = {{LoftQ}: {LoRA}-Fine-Tuning-Aware Quantization for Large Language Models},
  author = {Li, Yixiao and Yu, Yifan and Liang, Chen and He, Pengcheng and Karampatziakis, Nikos and Chen, Weizhu and Zhao, Tuo},
  booktitle = {Proceedings of the 12th International Conference on Learning Representations},
  year = {2024},

}

@inproceedings{xu2023qalora,
  title = {{QA-LoRA}: Quantization-Aware Low-Rank Adaptation of Large Language Models},
  author = {Xu, Yuhui and Xie, Lingxi and Gu, Xiaotao and Chen, Xin and Chang, Heng and Zhang, Hengheng and Chen, Zhengsu and Zhang, Xiaopeng and Tian, Qi},
  booktitle = {Proceedings of the 12th International Conference on Learning Representations},
  year = {2024},

}

@inproceedings{meng2024pissa,
  title = {{PiSSA}: Principal Singular Values and Singular Vectors Adaptation of Large Language Models},
  author = {Meng, Fanxu and Wang, Zhaohui and Zhang, Muhan},
  booktitle = {Advances in Neural Information Processing Systems},
  volume = {37},
  year = {2024},

}

@inproceedings{wang-etal-2025-milora,
    title = "{M}i{L}o{RA}: Harnessing Minor Singular Components for Parameter-Efficient {LLM} Finetuning",
    author = "Wang, Hanqing  and
      Li, Yixia  and
      Wang, Shuo  and
      Chen, Guanhua  and
      Chen, Yun",
    editor = "Chiruzzo, Luis  and
      Ritter, Alan  and
      Wang, Lu",
    booktitle = "Proceedings of the 2025 Conference of the Nations of the Americas Chapter of the Association for Computational Linguistics: Human Language Technologies (Volume 1: Long Papers)",
    month = apr,
    year = "2025",
    address = "Albuquerque, New Mexico",
    publisher = "Association for Computational Linguistics",
    url = "https://aclanthology.org/2025.naacl-long.248/",
    doi = "10.18653/v1/2025.naacl-long.248",
    pages = "4823--4836",
    ISBN = "979-8-89176-189-6"
}

@misc{buehler2024olora,
  title = {{OLoRA}: Orthonormal Low-Rank Adaptation of Large Language Models},
  author = {B{\"u}y{\"u}kaky{\"u}z, Kerim},
  year = {2024},
  eprint = {2406.01775},
  archivePrefix = {arXiv},
  primaryClass = {cs.CL},
  doi = {10.48550/arXiv.2406.01775},

}

@inproceedings{yang2024corda,
  title = {{CorDA}: Context-Oriented Decomposition Adaptation of Large Language Models for Task-Aware Parameter-Efficient Fine-tuning},
  author = {Yang, Yibo and Li, Xiaojie and Zhou, Zhongzhu and Song, Shuaiwen Leon and Wu, Jianlong and Nie, Liqiang and Ghanem, Bernard},
  booktitle = {Advances in Neural Information Processing Systems},
  volume = {37},
  year = {2024},

}

@misc{yuan2023asvd,
  title = {{ASVD}: Activation-aware Singular Value Decomposition for Compressing Large Language Models},
  author = {Yuan, Zhihang and Shang, Yuzhang and Song, Yue and Yang, Dawei and Wu, Qiang and Yan, Yan and Sun, Guangyu},
  year = {2023},
  eprint = {2312.05821},
  archivePrefix = {arXiv},
  primaryClass = {cs.CL},
  doi = {10.48550/arXiv.2312.05821},

}

@inproceedings{lin2024awq,
  title = {{AWQ}: Activation-aware Weight Quantization for On-Device {LLM} Compression and Acceleration},
  author = {Lin, Ji and Tang, Jiaming and Tang, Haotian and Yang, Shang and Chen, Wei-Ming and Wang, Wei-Chen and Xiao, Guangxuan and Dang, Xingyu and Gan, Chuang and Han, Song},
  booktitle = {Proceedings of Machine Learning and Systems},
  volume = {6},
  year = {2024},

}

@misc{hinton2015distilling,
  title = {Distilling the Knowledge in a Neural Network},
  author = {Hinton, Geoffrey and Vinyals, Oriol and Dean, Jeff},
  year = {2015},
  eprint = {1503.02531},
  archivePrefix = {arXiv},
  primaryClass = {stat.ML},
  note = {{NIPS} Deep Learning and Representation Learning Workshop},

}

@article{gou2021knowledge,
  title = {Knowledge Distillation: A Survey},
  author = {Gou, Jianping and Yu, Baosheng and Maybank, Stephen J. and Tao, Dacheng},
  journal = {International Journal of Computer Vision},
  volume = {129},
  number = {6},
  pages = {1789--1819},
  year = {2021},
  publisher = {Springer},
  doi = {10.1007/s11263-021-01453-z},

}

@inproceedings{azimi2024kdlora,
  title = {{KD-LoRA}: A Hybrid Approach to Efficient Fine-Tuning with {LoRA} and Knowledge Distillation},
  author = {Azimi, Rambod and Rishav, Rishav and Teichmann, Marek and Ebrahimi Kahou, Samira},
  booktitle = {Proceedings of The 4th NeurIPS Efficient Natural Language and Speech Processing Workshop},
  pages = {73--80},
  year = {2024},
  editor = {Rezagholizadeh, Mehdi and Passban, Peyman and Samiee, Soheila and Partovi Nia, Vahid and Cheng, Yu and Deng, Yue and Liu, Qun and Chen, Boxing},
  volume = {262},
  series = {Proceedings of Machine Learning Research},
  publisher = {PMLR},

}

@misc{yang2024llmneo,
  title = {{LLM-Neo}: Parameter Efficient Knowledge Distillation for Large Language Models},
  author = {Yang, Runming and Wu, Taiqiang and Wang, Jiahao and Hu, Pengfei and Wu, Yik-Chung and Wong, Ngai and Yang, Yujiu},
  year = {2024},
  eprint = {2411.06839},
  archivePrefix = {arXiv},
  primaryClass = {cs.CL},
  doi = {10.48550/arXiv.2411.06839},

}

@inproceedings{hwang2024pclora,
  title = {{PC-LoRA}: Low-Rank Adaptation for Progressive Model Compression with Knowledge Distillation},
  author = {Hwang, Injoon and Park, Haewon and Lee, Youngwan and Yang, Jooyoung and Maeng, SunJae},
  booktitle = {Proceedings of the IEEE/CVF Conference on Computer Vision and Pattern Recognition Workshops},
  pages = {1--8},
  year = {2024},

}

@inproceedings{wang2018glue,
  title={GLUE: A multi-task benchmark and analysis platform for natural language understanding},
  author={Wang, Alex and Singh, Amanpreet and Michael, Julian and Hill, Felix and Levy, Omer and Bowman, Samuel},
  booktitle={Proceedings of the 2018 EMNLP workshop BlackboxNLP: Analyzing and interpreting neural networks for NLP},
  pages={353--355},
  year={2018}
}

@misc{
liu2020roberta,
title={Ro{\{}BERT{\}}a: A Robustly Optimized {\{}BERT{\}} Pretraining Approach},
author={Yinhan Liu and Myle Ott and Naman Goyal and Jingfei Du and Mandar Joshi and Danqi Chen and Omer Levy and Mike Lewis and Luke Zettlemoyer and Veselin Stoyanov},
year={2020},
}

@inproceedings{zhong2024seeking,
  title={Seeking neural nuggets: Knowledge transfer in large language models from a parametric perspective},
  author={Zhong, Ming and An, Chenxin and Chen, Weizhu and Han, Jiawei and He, Pengcheng},
  booktitle={International Conference on Learning Representations},
  volume={2024},
  pages={25576--25596},
  year={2024}
}

@article{halko2011finding,
  title   = {Finding Structure with Randomness: Probabilistic Algorithms for Constructing Approximate Matrix Decompositions},
  author  = {Halko, Nathan and Martinsson, Per-Gunnar and Tropp, Joel A.},
  journal = {SIAM Review},
  volume  = {53},
  number  = {2},
  pages   = {217--288},
  year    = {2011},
  doi     = {10.1137/090771806}
}
\bibliographystyle{icml2026}

\newpage
\appendix
\onecolumn
 
\section{SAD-LoRA Pseudocode}
\label{app:algorithm}

\begin{algorithm}[h]
\caption{SAD-LoRA Training}
\label{alg:sadlora}
\footnotesize
\begin{minipage}{0.98\linewidth}
\begin{tabbing}
\hspace{1.2em}\=\hspace{1.2em}\=\hspace{1.2em}\=\kill

\textbf{Input:} Pretrained student weights $\{W_0^{(\ell)}\}$; same-architecture reference teacher
$\{W_T^{\mathrm{ref},(\ell)}\}$ in the student parameter space;\\ functional teacher $f_T$ supplying logits; training data $\mathcal{D}$; calibration set $\mathcal{D}_{\mathrm{cal}}$; adapted layers $\mathcal{S}$; ranks $\{r_\ell\}_{\ell\in\mathcal{S}}$;\\ hyperparameters $\alpha,\beta,\tau,\lambda_{\mathrm{KD}}$. \\[0.35em]

\textbf{Offline spectral target construction.} \\
\textbf{for} each adapted layer $\ell\in\mathcal{S}$ \textbf{do} \\
\> Compute the student-space reference update
$\DWT^{(\ell)} \leftarrow W_T^{\mathrm{ref},(\ell)} - W_0^{(\ell)}$. \\
\> Collect centered calibration activations
$X_c^{(\ell)} \in \mathbb{R}^{n\times\din}$ from the frozen student backbone. \\
\> Form the sketch
$Z^{(\ell)} \leftarrow \tfrac{1}{\sqrt{n-1}}\DWT^{(\ell)}X_c^{(\ell)\top}$. \\
\> Compute truncated SVD of $Z^{(\ell)}$ to obtain
$\Utr^{(\ell)}$ and $\sigt_{T,r}^{(\ell)}$. \\
\> Register $\Utr^{(\ell)}$ and $\sigt_{T,r}^{(\ell)}$ as frozen buffers. \\
\textbf{end for} \\[0.35em]

\textbf{Spectral initialization.} \\
\textbf{for} each adapted layer $\ell\in\mathcal{S}$ \textbf{do} \\
\> Initialize $B^{(\ell)} \leftarrow \Utr^{(\ell)}$ and $A^{(\ell)} \leftarrow 0$. \\
\> \textbf{if} $r_\ell$ exceeds the effective spectral rank \textbf{then} \\
\>\> Pad the remaining columns of $B^{(\ell)}$ with a random orthonormal complement. \\
\> \textbf{end if} \\
\textbf{end for} \\[0.35em]

\textbf{Training.} \\
\textbf{for} each minibatch $(x,y)\sim\mathcal{D}$ \textbf{do} \\
\> $z_T \leftarrow f_T(x)$ \hfill \textit{no gradient} \\
\> $z_S \leftarrow f_S(x)$ \\
\> $\mathcal{L}_{\mathrm{task}} \leftarrow \mathrm{TaskLoss}(z_S,y)$ \\
\> $\Lkd \leftarrow
\tau^2\,\mathrm{KL}\!\left(
\mathrm{softmax}(z_T/\tau)\,\|\,\mathrm{softmax}(z_S/\tau)
\right)$ \\

\> \textbf{for} each adapted layer $\ell\in\mathcal{S}$ \textbf{do} \\
\>\> $Q_B^{(\ell)},R_B^{(\ell)} \leftarrow \mathrm{QR}(B^{(\ell)})$ \\
\>\> $\Lalign^{(\ell)}
\leftarrow
1-\tfrac{1}{r_\ell}
\big\|Q_B^{(\ell)\top}\Utr^{(\ell)}\big\|_F^2$ \\
\>\> $\hat{\sigma}^{(\ell)}
\leftarrow
\mathrm{svdvals}\!\left(R_B^{(\ell)}A^{(\ell)}\right)$ \\
\>\> $\Lcoeff^{(\ell)}
\leftarrow
\frac{\big\|\hat{\sigma}^{(\ell)}-\sigt_{T,r}^{(\ell)}\big\|_2^2}
{\big\|\sigt_{T,r}^{(\ell)}\big\|_2^2+\varepsilon}$ \\
\> \textbf{end for} \\

\> $\mathcal{L}
\leftarrow
\mathcal{L}_{\mathrm{task}}
+\lambda_{\mathrm{KD}}\Lkd
+\tfrac{\alpha}{|\mathcal{S}|}\sum_{\ell\in\mathcal{S}}\Lalign^{(\ell)}
+\tfrac{\beta}{|\mathcal{S}|}\sum_{\ell\in\mathcal{S}}\Lcoeff^{(\ell)}$ \\
\> Backpropagate $\mathcal{L}$ and update only
$\{B^{(\ell)},A^{(\ell)}\}_{\ell\in\mathcal{S}}$ with AdamW. \\
\textbf{end for}

\end{tabbing}
\end{minipage}
\end{algorithm}

\section{Proofs of Theoretical Results}\label{app:proofs}
This appendix collects the full proofs of the analytical results in Section~\ref{sec:analysis}. We use the same notation: $\tWT = \DWT\Sigma_x^{1/2} \in \mathbb{R}^{\dout\times\din}$, with compact SVD $\tWT = \Ut\widetilde\Sigma_T\widetilde V_T^\top$ and singular values $\sigt_1 \geq \sigt_2 \geq \cdots \geq 0$. We write $\Utr = \Ut[:,1{:}r]$ for the matrix of top-$r$ left singular vectors and $\widetilde W_T^{(r)} = \Utr\widetilde\Sigma_{T,r}\widetilde V_{T,r}^\top$ for the rank-$r$ truncation.
 
\subsection{Useful Identities}
\label{app:identities}
 
We collect three elementary facts used throughout.
 
\textbf{(F1) Pythagoras for the Frobenius norm.} For any matrix $X \in \mathbb{R}^{\dout\times\din}$ and any orthogonal projector $P \in \mathbb{R}^{\dout\times\dout}$,
\begin{equation*}
    \|X\|_F^2 = \|PX\|_F^2 + \|(I-P)X\|_F^2 ,
\end{equation*}
because $P$ and $I-P$ have orthogonal column ranges and Frobenius inner products of column-orthogonal matrices vanish.
 
\textbf{(F2) Eckart--Young--Mirsky.} The minimum of $\|\tWT - M\|_F^2$ over all rank-$r$ matrices $M$ is $\sum_{i>r}\sigt_i^2$, attained by $\tWT^{(r)} = \Utr\widetilde\Sigma_{T,r}\widetilde V_{T,r}^\top$, whose column space is $\colspan(\Utr)$.
 
\textbf{(F3) Projection of a matrix onto a subspace via $P_B$.} For $B$ with full column rank and $P_B = B(B^\top B)^{-1}B^\top$,
\begin{equation*}
    \|P_B X\|_F^2 = \mathrm{tr}(X^\top P_B X) = \sum_i \|P_B x_i\|_2^2
\end{equation*}
where $x_i$ denotes the $i$-th column of $X$.
 
\subsection{Proof of Proposition~\ref{prop:data_weighted_subspace}}
\label{app:proof_prop1}
 
\begin{proof}
Direct application of Eckart--Young--Mirsky (F2) to $\tWT$. The minimum over rank-$r$ matrices of $\|\tWT - M\|_F^2$ equals $\sum_{i>r}\sigt_i^2$ and is attained by $\tWT^{(r)} = \Utr\widetilde\Sigma_{T,r}\widetilde V_{T,r}^\top$. Its column space is $\colspan(\Utr)$.
\end{proof}
 
\subsection{Proof of Theorem~\ref{thm:decomp}}
\label{app:proof_thm1}
 
\begin{proof}
Let $M = BA\Sigma_x^{1/2} \in \mathbb{R}^{\dout\times\din}$ and let $P_B = B(B^\top B)^{-1}B^\top$ be the orthogonal projector onto $\colspan(B)$. Since $\colspan(M) \subseteq \colspan(B)$, we have $P_B M = M$, hence $(I-P_B)M = 0$.
 
\emph{Step 1: Pythagorean splitting.} Apply (F1) to $X = \tWT - M$:
\begin{align*}
    \mathcal{E}(B,A) \;=\; \|\tWT - M\|_F^2
    &= \|P_B(\tWT - M)\|_F^2 + \|(I-P_B)(\tWT - M)\|_F^2 \\
    &= \|P_B\tWT - M\|_F^2 + \|(I-P_B)\tWT\|_F^2 ,
\end{align*}
where the second equality uses $P_B M = M$ and $(I-P_B)M = 0$. This is the exact two-term projector decomposition.
 
\emph{Step 2: Identifying the rank residual.} Define $\mathcal{R} = \sum_{i>r}\sigt_i^2$. By (F2), for any rank-$r$ orthogonal projector $P$,
\begin{equation*}
    \|(I-P)\tWT\|_F^2 \;=\; \|\tWT\|_F^2 - \|P\tWT\|_F^2 \;=\; \sum_{i\geq 1}\sigt_i^2 - \|P\tWT\|_F^2 .
\end{equation*}
The maximum of $\|P\tWT\|_F^2$ over rank-$r$ projectors is $\sum_{i=1}^r\sigt_i^2$, achieved iff $P$ projects onto $\colspan(\Utr)$. Therefore
\begin{equation*}
    \|(I-P_B)\tWT\|_F^2 \;\geq\; \mathcal{R} ,
\end{equation*}
with equality if and only if $\colspan(B) = \colspan(\Utr)$.
 
\emph{Step 3: Defining $\mathcal{S}(B)$.} Set
\begin{equation*}
    \mathcal{S}(B) \;:=\; \|(I-P_B)\tWT\|_F^2 - \mathcal{R} \;\geq\; 0 .
\end{equation*}
By Step 2, $\mathcal{S}(B) = 0$ iff $\colspan(B) = \colspan(\Utr)$.
 
\emph{Step 4: Defining $\mathcal{C}(B,A)$.} Set
\begin{equation*}
    \mathcal{C}(B,A) \;:=\; \|P_B\tWT - BA\Sigma_x^{1/2}\|_F^2 \;\geq\; 0 ,
\end{equation*}
which is nonnegative by definition.
 
\emph{Step 5: Combining.} Substituting back into Step 1:
\begin{equation*}
    \mathcal{E}(B,A) \;=\; \|P_B\tWT - BA\Sigma_x^{1/2}\|_F^2 + \|(I-P_B)\tWT\|_F^2
    \;=\; \mathcal{C}(B,A) + \mathcal{S}(B) + \mathcal{R} .
\end{equation*}
This is Eq.~\ref{eq:three_term}. The term $\mathcal{R}$ depends only on the rank budget; $\mathcal{S}(B)$ depends only on $\colspan(B)$; $\mathcal{C}(B,A)$ depends on both $B$ and $A$.
 
\emph{Optimality of $A$ given $B$.} For any fixed $B$, $\mathcal{C}(B,A)$ is a quadratic in $A$ minimized at the (pseudo-)inverse solution. Specifically, when $\Sigma_x$ is full rank,
\begin{equation*}
    A^*(B) \;=\; (R_B^\top R_B)^{-1} R_B^\top Q_B^\top \tWT \,(\Sigma_x^{1/2})^{-1} ,
\end{equation*}
so $\mathcal{C}(B,A^*) = 0$. When $\sigt_r>\sigt_{r+1}$, $\mathcal{S}(B)=0$ characterizes the unique optimal output subspace at fixed rank $r$. Under degeneracy, the optimal subspace is not unique, and $\Utr$ denotes the empirical SVD target selected for alignment.
\end{proof}
 
\subsection{Proof of Proposition~\ref{prop:rank_suff}}
\label{app:proof_prop2}
 
\begin{proof}
Fix $\epsilon > 0$. By Theorem~\ref{thm:decomp}, $\mathcal{E}(B,A) \geq \mathcal{R} = \sum_{i>r}\sigt_i^2$ at any rank $r$, with equality attained by $B^* = \Utr$ and the corresponding optimal $A^*$ from Step 5 of Appendix~\ref{app:proof_thm1}. Therefore
\begin{equation*}
    \min_{B,A:\,\mathrm{rank}(B)\leq k} \mathcal{E}(B,A) \;=\; \sum_{i>k}\sigt_i^2 .
\end{equation*}
The smallest $k$ for which this minimum drops below $\epsilon$ is exactly
\begin{equation*}
    r^*_\epsilon(\DWT,\Sigma_x) \;=\; \min\!\left\{k\in\mathbb{N} : \sum_{i>k}\sigt_i^2 < \epsilon\right\} ,
\end{equation*}
which is Eq.~\ref{eq:rank_suff}.
\end{proof}
 
\subsection{Surrogate Relationship: $\Lalign$ and $\mathcal{S}(B)$}
\label{app:lalign_surrogate}
 
We record the precise sense in which $\Lalign$ is a surrogate for $\mathcal{S}(B)$.
 
\begin{lemma}[Zero set]
\label{lem:zero_set}
$\Lalign(B) = 0$ if and only if $\colspan(B) = \colspan(\Utr)$, if and only if $\mathcal{S}(B) = 0$.
\end{lemma}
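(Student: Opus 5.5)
The plan is to prove the two equivalences separately. The first one, $\Lalign(B)=0 \iff \colspan(B)=\colspan(\Utr)$, is pure linear algebra and needs no spectral assumptions. The second one, $\colspan(B)=\colspan(\Utr)\iff\mathcal{S}(B)=0$, is the characterization already stated in Theorem~\ref{thm:decomp}. Throughout I take the standing hypotheses of that theorem: $B$ has full column rank $r$, and the gap condition $\sigt_r>\sigt_{r+1}$ holds. I will state explicitly that the second equivalence needs the gap; without it, $\mathcal{S}(B)=0$ holds for every optimal subspace inside the degenerate eigenspace, as the theorem itself remarks.

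\emph{Step 1 (alignment zero set).} Write $B=Q_BR_B$ with $Q_B$ having $r$ orthonormal columns, so $P_B=Q_BQ_B^\top$. Let $u_1,\dots,u_r$ be the columns of $\Utr$. Then
\[
\|Q_B^\top\Utr\|_F^2=\mathrm{tr}(\Utr^\top P_B\Utr)=\sum_{i=1}^r\|P_Bu_i\|_2^2 .
\]
Each term satisfies $\|P_Bu_i\|_2^2\le\|u_i\|_2^2=1$, with equality iff $u_i\in\colspan(B)$. Hence $\Lalign(B)=0$ iff every $u_i$ lies in $\colspan(B)$, i.e.\ iff $\colspan(\Utr)\subseteq\colspan(B)$. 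Both subspaces have dimension $r$, so this inclusion is equivalent to equality. Equivalently, one can read this off from $\sum_i\sin^2\theta_i=0$, which forces all principal angles to vanish.

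\emph{Step 2 (misalignment zero set).} By (F1), $\mathcal{S}(B)=\sum_i\sigt_i^2-\|P_B\tWT\|_F^2-\mathcal{R}=\sum_{i\le r}\sigt_i^2-\mathrm{tr}(P_B\tWT\tWT^\top)$. So $\mathcal{S}(B)=0$ iff the rank-$r$ projector $P_B$ attains the Ky Fan maximum $\max_P\mathrm{tr}(P\,\tWT\tWT^\top)=\sum_{i\le r}\sigt_i^2$.

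\emph{Step 3 (equality case).} I expect this to be the only real obstacle, because the paper's proof of Theorem~\ref{thm:decomp} asserts the equality case without derivation. I will prove it directly. Expand $\mathrm{tr}(P\,\tWT\tWT^\top)=\sum_j\sigt_j^2\,\|Pu_j\|_2^2$, where $u_j$ now ranges over all left singular vectors. The weights $w_j=\|Pu_j\|^2$ lie in $[0,1]$ and sum to at most $r$. By a rearrangement argument, the sum is at most $\sum_{i\le r}\sigt_i^2$. If $\sigt_r>\sigt_{r+1}$, equality forces $w_1=\dots=w_r=1$, since any mass moved to an index $j>r$ strictly loses $\sigt_i^2-\sigt_j^2>0$. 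Thus $u_1,\dots,u_r\in\operatorname{range}(P)$, and by equal dimensions $\operatorname{range}(P)=\colspan(\Utr)$. The converse direction is immediate: $P_B=\Utr\Utr^\top$ gives $\|P_B\tWT\|_F^2=\sum_{i\le r}\sigt_i^2$. Chaining Steps 1 and 3 yields the lemma.
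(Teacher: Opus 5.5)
Your proof is correct and follows the same route as the paper: the first equivalence goes through the orthonormal factor $Q_B$ and principal angles, and the second goes through the Ky Fan/Eckart--Young characterization that the paper cites from Step~2 of the proof of Theorem~\ref{thm:decomp}, except that you derive its equality case rather than assert it. Your explicit gap hypothesis $\sigt_r>\sigt_{r+1}$ is a genuine correction, not a redundant addition: the lemma and that Step~2 state the ``iff'' unconditionally, which fails under degeneracy (e.g.\ $r=1$, $\sigt_1=\sigt_2$, $B=\widetilde u_2$ gives $\mathcal{S}(B)=0$ but $\Lalign(B)=1$).
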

 
\begin{proof}
Let $B = Q_B R_B$ be the thin QR decomposition. Then
\begin{equation*}
    \Lalign(B) = 1 - \tfrac{1}{r}\|Q_B^\top \Utr\|_F^2 = \tfrac{1}{r}\sum_{i=1}^r \sin^2\theta_i,
\end{equation*}
where $\theta_i$ are the principal angles between $\colspan(Q_B) = \colspan(B)$ and $\colspan(\Utr)$. $\Lalign(B) = 0$ iff every $\theta_i = 0$, iff $\colspan(B) = \colspan(\Utr)$. By Step 2 of Appendix~\ref{app:proof_thm1}, this is also equivalent to $\mathcal{S}(B) = 0$.
\end{proof}
 
\textbf{Why $\Lalign \neq \mathcal{S}$.} The exact spectral form of the subspace miss is
\begin{equation*}
    \|(I-P_B)\tWT\|_F^2 \;=\; \sum_i \sigt_i^2\,\big(1 - \|P_B \widetilde u_i\|_2^2\big) ,
\end{equation*}
which is $\sigt_i^2$-weighted. The alignment loss $\Lalign = \tfrac{1}{r}\sum_{i=1}^r\sin^2\theta_i$ is unweighted (each angle contributes equally) and considers only the top $r$ teacher directions. Thus $\Lalign$ shares its zero set with $\mathcal{S}$ (Lemma~\ref{lem:zero_set}) but does not match its level sets. In practice this is desirable: $\Lalign$ is differentiable in $B$, computable from a single thin QR plus one $r\times r$ matmul, and numerically stable, whereas a $\sigt_i^2$-weighted version would require computing the full $\widetilde u_i$ basis at every step.
 
\subsection{Coefficient Loss: Data-Weighted vs.\ Unweighted Forms}
\label{app:coeff_dw}
 
The principled within-subspace coefficient term implied by $\mathcal{C}(B,A) = \|P_B\tWT - BA\Sigma_x^{1/2}\|_F^2$ depends on the singular values of $BA\Sigma_x^{1/2}$, since $P_B\tWT$ has top-$r$ singular values bounded by those of $\tWT$ projected onto $\colspan(B)$. The matching loss
\begin{equation*}
    \Lcoeff^{\mathrm{dw}}(B,A) \;=\; \frac{\big\|\sigma(R_B A \Sigma_x^{1/2}) - \sigt_{T,r}\big\|_2^2}{\big\|\sigt_{T,r}\big\|_2^2 + \varepsilon}
\end{equation*}
is therefore the data-weighted form. Computing it requires applying $\Sigma_x^{1/2}$, which is not stored; the cheapest equivalent is right-multiplication by the calibration sketch $X_c$ (which has the same nonzero singular values up to the $\sqrt{n-1}$ factor), at cost $O(rn\din)$ per layer per step. We instead use
\begin{equation*}
    \Lcoeff(B,A) \;=\; \frac{\big\|\sigma(R_B A) - \sigt_{T,r}\big\|_2^2}{\big\|\sigt_{T,r}\big\|_2^2 + \varepsilon} ,
\end{equation*}
which costs $O(r^2\din)$ per layer per step (one $r\times\din$ SVD) and equals $\Lcoeff^{\mathrm{dw}}$ when $\Sigma_x = I$. In the anisotropic case the two losses differ; both share the trivial zero $A=0,\sigt_{T,r}=0$ but not their general level sets. Because our ablations identify $\Lcoeff$ as auxiliary rather than load-bearing (Section~\ref{sec:ablation_rank_analysis}), the practical cost of using the proxy is small. We confirmed in preliminary runs on SST-2, MRPC, and CoLA that swapping $\Lcoeff$ for $\Lcoeff^{\mathrm{dw}}$ produces no consistent improvement.
 
\section{Additional Experimental Details}
\label{app:experimental_details}
 
\subsection{Hyperparameters}
\label{app:hyperparams}
 
Table~\ref{tab:hyperparams} collects all SAD-LoRA hyperparameters. Optimization, calibration size, and energy threshold values are held fixed across all six GLUE tasks and all five ranks; only the LoRA rank itself varies in our rank-trajectory experiments.
 
\begin{table}[h]
\centering
\caption{SAD-LoRA hyperparameters used throughout. Values are matched across baselines wherever they share a parameter (optimizer, LR schedule, KD temperature).}
\label{tab:hyperparams}
\small
\begin{tabular}{lll}
\toprule
Parameter & Value & Notes \\
\midrule
\multicolumn{3}{l}{\textit{SAD-LoRA-specific}} \\
\midrule
Alignment weight $\alpha$ & $0.2$ & Eq.~\ref{eq:sadlora_objective} \\
Coefficient weight $\beta$ & $0.05$ & Eq.~\ref{eq:sadlora_objective}; $0$ in SAD-LoRA-Align \\
Distillation temperature $\tau$ & $4.0$ & Eq.~\ref{eq:kd_loss} \\
KD weight $\lambda_{\mathrm{KD}}$ & $1.0$ & Eq.~\ref{eq:sadlora_objective} \\
Calibration size $n$ & $1024$ & Phase 2 (Section~\ref{sec:method}) \\
Spectral energy threshold $\delta$ & $0.01$ & Prop.~\ref{prop:rank_suff} \\
Stored ranks $r_{\max}$ & $64$ & Cap on per-layer SVD \\
Numerical $\varepsilon$ & $10^{-8}$ & Eq.~\ref{eq:coefficient_loss} \\
\midrule
\multicolumn{3}{l}{\textit{Optimizer (matched across all methods)}} \\
\midrule
Optimizer & AdamW & Decoupled weight decay \\
Weight decay & $0.01$ & --- \\
Learning rate (LoRA) & $5\!\times\!10^{-4}$ & --- \\
Warmup fraction & $6\%$ & Linear, then linear decay \\
Batch size & $32$ & SST-2, QNLI; $16$ for small tasks \\
Max sequence length & $128$ & Standard for GLUE \\
Mixed precision & fp16 & Spectral ops in fp32 \\
Seeds & $\{42,123,2024\}$ & --- \\
\bottomrule
\end{tabular}
\end{table}
 
\subsection{Optimization}
We use AdamW with weight decay $0.01$, learning rate $5\!\times\!10^{-4}$ for LoRA parameters, linear warmup over the first $6\%$ of steps, and linear decay thereafter. Mixed-precision training is enabled, with all spectral operations (QR, SVD, computation of $\Lalign$ and $\Lcoeff$) executed in float32 for numerical stability.
 
\subsection{Spectral Target Construction}
For each adapted layer, Phase~1 computes $\DWT^{(\ell)}$ from a reference RoBERTa-base teacher fine-tuned on the same task with the same data split. Phase~2 uses $n=1024$ calibration samples drawn from the training set; activations are mean-centered before forming $Z^{(\ell)}$. Truncated SVD uses randomized SVD \citep{halko2011finding} when $\dout > 256$.
 
\subsection{Per-layer Rank}
We use a uniform rank $r$ across layers in our main results, matching the convention of prior baselines and isolating the effect of the SAD-LoRA losses from rank-allocation choices. Proposition~\ref{prop:rank_suff} suggests that allocating ranks per layer using the spectral tail of $\widetilde W_T^{(\ell)}$ (with $\delta=0.01$) could yield further gains; preliminary results indicate small but positive effects on parameter efficiency, but a careful per-layer ablation is left to future work.
 
\subsection{Subspace Alignment Across Tasks and Ranks}
\label{app:alignment_scores}
 
Table~\ref{tab:alignment_scores} reports the final post-training subspace alignment $A^{(\ell)} = \tfrac{1}{r}\|Q_B^{(\ell)\top}\Utr^{(\ell)}\|_F^2$, averaged over adapted layers and three seeds, for SAD-LoRA at each rank. Higher is better; $1.0$ means $\colspan(B)$ exactly recovers the data-weighted teacher subspace. The pattern mirrors the GLUE results: tasks with the strongest task-specific spectral structure (RTE, MRPC, CoLA, STS-B) reach near-perfect alignment even at $r=4$, while QNLI and SST-2 — which are nearly saturated by the pretrained backbone and therefore have flatter $\widetilde W_T$ spectra — require larger $r$ to recover the same alignment.
 
\begin{table}[h]
\centering
\caption{Mean post-training subspace alignment $\tfrac{1}{r}\|Q_B^\top\Utr\|_F^2$ for SAD-LoRA across tasks and ranks (averaged over adapted layers and three seeds). Bold entries indicate $\geq 0.95$ alignment.}
\label{tab:alignment_scores}
\small
\begin{tabular}{lccccc}
\toprule
Task & $r{=}1$ & $r{=}2$ & $r{=}4$ & $r{=}8$ & $r{=}16$ \\
\midrule
RTE   & $\mathbf{1.00}$ & $\mathbf{1.00}$ & $\mathbf{0.99}$ & $\mathbf{0.99}$ & $\mathbf{0.99}$ \\
MRPC  & $0.87$ & $0.90$ & $0.94$ & $\mathbf{0.96}$ & $\mathbf{0.97}$ \\
CoLA  & $0.82$ & $0.90$ & $0.93$ & $\mathbf{0.95}$ & $\mathbf{0.97}$ \\
SST-2 & $0.75$ & $0.84$ & $0.88$ & $0.91$ & $0.94$ \\
QNLI  & $0.69$ & $0.78$ & $0.84$ & $0.88$ & $0.90$ \\
STS-B & $0.68$ & $0.82$ & $0.90$ & $0.94$ & $\mathbf{0.96}$ \\
\bottomrule
\end{tabular}
\end{table}
 
\subsection{Per-Task Ablation Across Ranks}
\label{app:ablation_per_task}
 
Fig.~\ref{fig:ablation_full} extends Fig.~\ref{fig:wins_r8} by reporting all three SAD-LoRA variants — full SAD-LoRA, SAD-LoRA-Align ($\beta{=}0$), and SAD-LoRA-Coeff ($\alpha{=}0$) — at both $r{=}4$ and $r{=}8$ for every GLUE task. The deltas are computed against the strongest non-SAD baseline (KD-LoRA, Logit-MSE, NN-Init, or PiSSA-Init) for each task and rank. Two patterns are visible: (i) SAD-LoRA-Coeff is the most damaging ablation, with large negative deltas on RTE at both ranks ($-6.1$ at $r{=}4$, $-4.1$ at $r{=}8$), confirming that subspace alignment carries the load; (ii) SAD-LoRA-Align is competitive across all tasks at $r{=}8$, while the full objective adds a measurable advantage on SST-2 (+0.5 at both ranks). MRPC is the consistent exception, where pretrained-weight principal structure (PiSSA-Init) remains a stronger inductive bias than the data-weighted teacher update.
 
\begin{figure}[h]
\centering
\includegraphics[width=\linewidth]{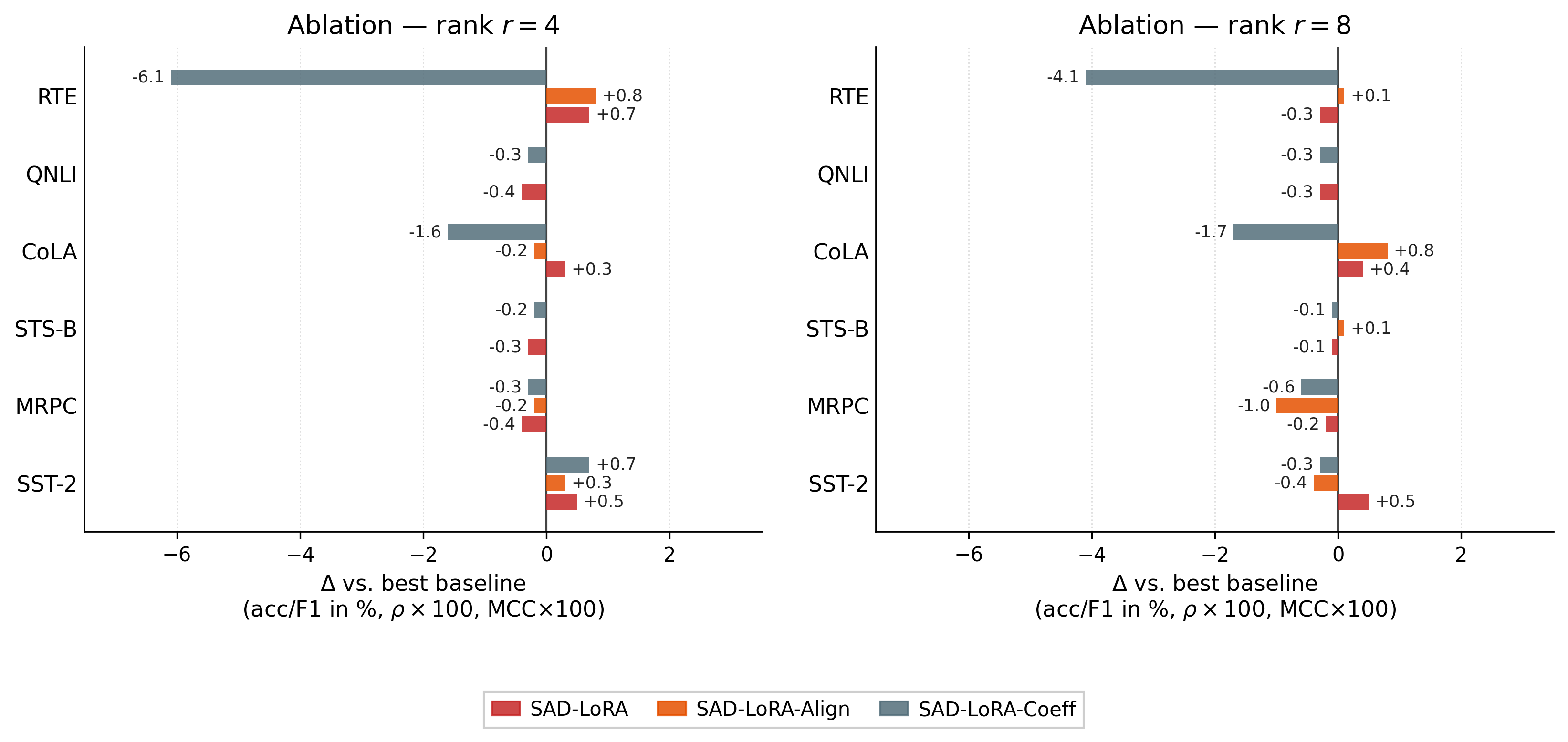}
\caption{Per-task ablation: $\Delta$ versus the best non-SAD baseline at $r{=}4$ (left) and $r{=}8$ (right) for each SAD-LoRA variant. Positive values indicate improvement over the strongest baseline. Removing $\Lalign$ (SAD-LoRA-Coeff) is consistently more damaging than removing $\Lcoeff$ (SAD-LoRA-Align).}
\label{fig:ablation_full}
\end{figure}
 
\subsection{Reproducibility}
Code is available at https://github.com/OmerTariq-KAIST/sad-lora. Random seeds are $\{42,123,2024\}$. Each GLUE configuration was trained on a single RTX6000Pro~98GB GPU; total wall-clock time for the GLUE result table is approximately $48$~GPU-hours.

\end{document}